\documentclass[lettersize,journal]{IEEEtran}
\usepackage{amsmath,amsfonts,amsthm}
\usepackage{amssymb}
\usepackage{algorithmic}
\usepackage{algorithm}
\usepackage{array}
\usepackage{textcomp}
\usepackage{stfloats}
\usepackage{url}
\usepackage{verbatim}
\usepackage{graphicx}
\usepackage{cite}
\usepackage{enumitem}
\usepackage{booktabs}
\usepackage{xcolor}
\usepackage{threeparttable}
\usepackage{float}

\usepackage{tabularx} 
\usepackage{multirow}  
\usepackage{booktabs}  
\usepackage{graphicx}  
\usepackage{array}    
\usepackage{siunitx}   

\usepackage{caption}
\usepackage{subcaption}

\usepackage{makecell}

\newtheorem{theorem}{Theorem}

\usepackage{threeparttable}

\begin{document}

\title{Characterisation of Density-based FM generation methods in the context of Information Fusion}

\author{
\IEEEauthorblockN{Yanhao Huang, Christian Wagner} \\
\IEEEauthorblockA{\textit{Lab for Uncertainty in Data and Decision Making (LUCID)} \\
\textit{School of Computer Science, University of Nottingham}\\
Nottingham, United Kingdom \\
psxyh15@nottingham.ac.uk}
}

\maketitle

\begin{abstract}

Fuzzy Integral (FI) based aggregation provides a powerful mechanism for nuanced aggregation, for example, in ensemble approaches or decision-level fusion more generally. The main challenge of this approach is the appropriate parametrization of the Fuzzy Measure (FM), which captures the worths of the individual components--and their combinations--which are being fused. Here, widely used approaches including the Sugeno-$\lambda$ and Decomposable FMs,  parametrize the FM by extrapolating from the densities, i.e. the weights associated with individual sources, while respecting the FM's monotonicity constraint. This paper articulates that this information is, in general, insufficient to uniquely identify a discrete FM; but shows how an interval-valued FM can indeed be determined uniquely. 
We proceed to show how the incorporation of additional information beyond the above, such as the choice of a specific FI and a dataset, then allows for obtaining even more specific interval-valued FMs. In practice, establishing the quality of an empirically determined FM is not trivial. To help address this, we show how the likelihood with which a resulting interval FM encompasses the `ideal', i.e. the commonly intangible, best, or ground-truth numeric FM, can be determined, producing a confidence interval at a given confidence level. Finally, based on a series of experiments, we demonstrate empirically that the Choquet FI output based on this FM can also be regarded as the confidence interval for the `ideal' information fusion result, providing a novel means to characterize FI fusion outcomes a priori and charting a pathway for future research.

\end{abstract}

\begin{IEEEkeywords}
Fuzzy measures, interval, fuzzy integral, Monte Carlo method, confidence interval
\end{IEEEkeywords}

\section{Introduction} \label{sec1:introduction}
\IEEEPARstart{A}
measure is a set function that evaluates a set by assigning a number, typically a crisp value, that reflects its size, weight or worth. Among various types of measures, the fuzzy measure (FM) \cite{Sugeno1993FM} is a powerful tool to evaluate the worth of subsets of information sources, 
such as different experts in an expert decision-making context \cite{SIRBILADZE200571FM}; the worth of different sensors, in a sensor fusion context \cite{Gader2004FM}; or the worth of various classifiers, in an ensemble classification context \cite{Subhrajit2021FM}.
In principle, one can envision an `ideal', or \emph{true FM} in the sense that such a FM captures the true worth of all sources (e.g. sensors) and of their combinations. However, in practice, identifying this FM is challenging, not least as datasets are finite and may be biased. Perhaps even more crucially, variability and uncertainty in most problems means that such an ideal FM will generally not be a numeric FM, but instead be itself interval or distribution/fuzzy set valued.

Various methods to parametrize the FM have been developed over the years.
One of the most commonly applied methods is to generate FM from the worths of individual sources, i.e. the densities, in combination with 
the monotonicity constraint imposed by the FM, see Section~\ref{Generating the Fuzzy Measure},
including the popular Sugeno-$\lambda$ FM ($S_{\lambda}$-FM) \cite{Sugeno1993FM} and the $\perp$-decomposable FM (DFM)\cite{Siegfried1984FM}. 
As discussed in \cite{Huang2025conference}, FMs generated using this type of method may not capture the best possible FM for the given context, 
as the densities alone are insufficient to capture its underlying information.

When using the FM in combination with a Fuzzy Integral (FI) in an aggregation context, optimization techniques offer an alternative approach to identifying an appropriate FM, including least-squares based approaches 
\cite{Grabisch1995FM}
, minimum variance approaches \cite{KOJADINOVIC2005131FM,KOJADINOVIC2007498FM} and minimum distance approaches \cite{Kojadinovic2007FM}.
In order to apply such approaches, 
a specific fuzzy integral (FI), and a dataset must be selected and available to enable the computation of aggregated outcome and associated loss in respect to a ground truth.
This is also the case when FMs are identified through machine learning techniques more generally \cite{Simth2017FM,Scott2017ChI-DE,Islam2020FI}.

FMs generated by such optimization approaches are, as such, not independent of the FI selected. In other words, they are a step away from the true, or ideal FM mentioned earlier, with a bias specific to the FI chosen \cite{Christian2017FI}. 
For example, if a FM is identified in respect to the Sugeno FI \cite{Sugeno1993FM}, its parameters may be different than if the Choquet FI \cite{Murofushi1989ChoquetFI} had been chosen. 

In this paper, we explore the extent to which an underlying `ideal' and discrete FM can actually be identified based on basic principles and information available, i.e. the worth of the densities and the monotonicity constraint of the FM. 
Unlike aforementioned heuristic approaches such as the Sugeno-$\lambda$ and $\perp$-decomposable FMs which derive a numeric FM, 
we show that the information encoded in these heuristics is insufficient to determine a numeric FM precisely. 
Nevertheless, we highlight that this basic information is sufficient to determine interval bounds--or indeed, an interval-valued FM ($\overline{FM}$). 
We refer to this FM as the $\overline{FM}_{CA}$, where the subscript `$CA$' denotes `Context-Agnostic'. 

Building on the above, we analyse the relationship between traditional approaches to establish the FM including $S_{\lambda}$-FM and DFM, and $\overline{FM}_{CA}$. 
We discuss how these FMs are approximations of the `ideal' FM in the sense that they are independent of any specific FI. 
Conversely, if further, application-specific and thus non-general information is available, such as via the selection of a specific FI and/or a dataset, 
then a narrower subset of the context-agnostic interval-valued FM can be determined.

We propose a Monte Carlo (MC) based approach to extract this narrower $\overline{FM}$ based on the $\overline{FM}_{CA}$ in respect to a specific FI and a dataset. 
A threshold is applied to select a set of \emph{best} FMs sampled by MC to construct this. 
We refer to the resulting FM as $\overline{FM}_{CS}$ where the subscript `$CS$' denotes `Context-Specific':
it reflects the best-possible estimate of a context-specific FM based on available information.  
It can be implied that the `ideal' discrete FM is more likely to be located in this FM. 
But, as the `ideal' FM is unobtainable in practice, we in general do not know the likelihood of whether $\overline{FM}_{CS}$ actually encompasses the `ideal' FM. 

Thus, to describe the likelihood of $\overline{FM}_{CS}$ encompassing the `ideal' FM, a confidence level ($1-\alpha$) is introduced to transform the $\overline{FM}_{CS}$ into a new FM that acts as the confidence interval in respect to the `ideal' FM. We refer to the resulting FM as $\overline{FM}_{CI}$ where the subscript `$CI$' denotes `Confidence Interval'. We empirically show that the $\overline{FM}_{CI}$ covers the `ideal' FM at a given confidence level. 

Finally, further exploration on fusing information using the Choquet FI based on this FM shows that the FI outputs act as the confidence interval for the `ideal' information fusion result with the confidence level established for the FM. This is significant step forward as it enables the confidence-characterisation of the aggregates produced via the FI a priori--for the fist time.

Fig.~\ref{introduction_chart} presents an overview of the FMs proposed in this paper, along with the information they use, examples of worths corresponding to the source
combination $\{x_{1},x_{2}\}$ for illustration, and the Sections in which they are introduced. We re-emphasize that while one can theoretically envision the existence of the `ideal' FM shown for reference, it is unobtainable in practice. The purpose of the proposed FMs are to better approximate this `ideal' FM based on available information. We also note that of course such as `ideal' FM can also be conceptualized as an interval, for example under noise. Due to space limitations, we do not elaborate on this further here.

\begin{figure}[htbp]
    \centering
    \includegraphics[width=0.45\textwidth]{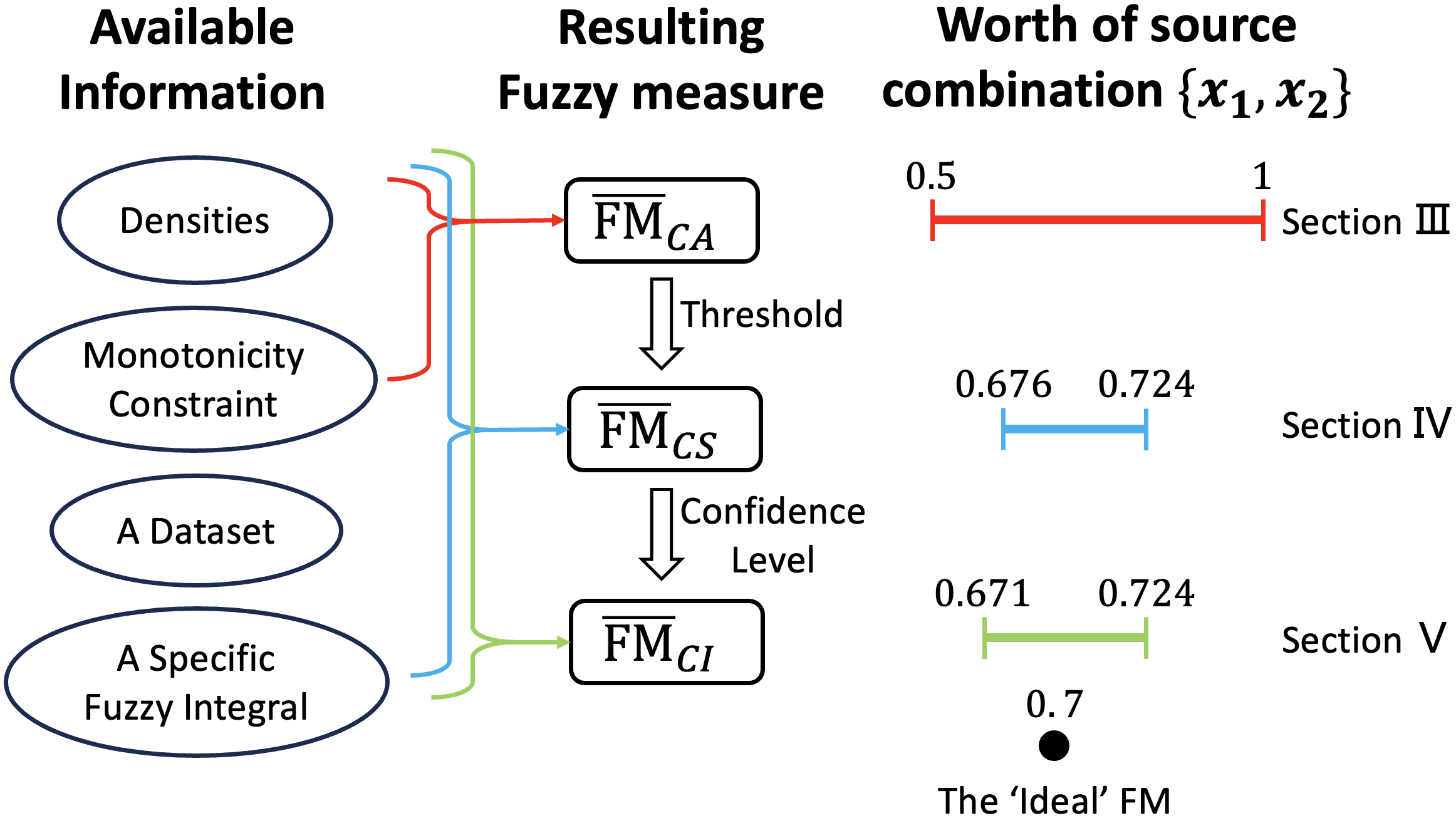}
    \caption{The proposed FM and their corresponding information.}
    \label{introduction_chart}
\end{figure}

The contributions of this paper are as follows.
\begin{enumerate}
    \item We articulate that the monotonicity constraint combined with discrete densities provides only sufficient information to inform an interval-valued FM, which we refer to as $\overline{FM}_{CA}$. We show that traditional FM generation techniques place FMs within the $\overline{FM}_{CA}$ but sometimes in quite different places, shedding light on their behaviour.

    \item For contexts where additional information is available, specifically where a specific FI and dataset are selected, we propose a MC based approach to construct a narrower $\overline{FM}_{CS}$. It reflects the best-possible interval-valued FM estimate based on available information.  

    \item In order to describe the likelihood of $\overline{FM}_{CS}$ covering the `ideal' discrete FM, we show how a confidence level can be assigned to transform it into a context-specific confidence interval-valued $\overline{FM}_{CI}$. This can be regarded as the confidence interval for the `ideal' discrete FM. 
    
    \item Finally, We empirically show that the interval-valued Choquet FI output generated in respect to $\overline{FM}_{CI}$ can be interpreted as the confidence interval for the `ideal' information fusion result with the confidence level of the $\overline{FM}_{CI}$, providing the first such a priori characterisation to the best of our knowledge.

\end{enumerate}

The remainder of the paper is organized as follows. Section \ref{sec2:background knowledge} provides the background knowledge of this paper, including brief introductions about FM and the FI. 
Section~\ref{Generate interval-valued FM from densities}, Section~\ref{Interval-valued FM based on a specific FI and a dataset} and Section~\ref{Generating Context-Specific Confidence Interval-valued FMs} provide step by step introductions on the generation of the $\overline{FM}_{CA}$, $\overline{FM}_{CS}$ and $\overline{FM}_{CI}$, respectively. In Section \ref{Experiment}, we use an example to demonstrate the proposed method. 
Finally, Section \ref{Conclusion} concludes the paper and presents future work. 
To assist the reader with the numerous acronyms and notation, we have compiled a selected list in Table~\ref{Acronyms and notation}.

\begin{table}[h]
    \centering
    \caption{Acronyms and notation}
    \begin{tabular}{cccc}
        \toprule
        FI & fuzzy integral & FM & fuzzy measure \\ 
        $\overline{FM}$ & interval-valued FM & MC & Monte Carlo \\
        CFI & the Choquet FI & CI & confidence interval \\
        $\overline{FM}_{CA}$ & \multicolumn{3}{c}{context-agnostic interval-valued FM} \\
        $\overline{FM}_{CS}$ & \multicolumn{3}{c}{context-specific interval-valued FM} \\
        $\overline{FM}_{CI}$ & \multicolumn{3}{c}{context-specific confidence interval-valued FM} \\
        \midrule
        \multicolumn{1}{c}{$X=\{x_{i}\}$} &  \multicolumn{3}{l}{set of sources} \\
        
        \multicolumn{1}{c}{$s$} &  \multicolumn{3}{l}{Monte Carlo sample size} \\
        \multicolumn{1}{c}{$l$} &  \multicolumn{3}{l}{threshold for selecting the top-performing discrete FMs} \\
        \multicolumn{1}{c}{$\alpha$} &  \multicolumn{3}{l}{significant level of the $\overline{FM}_{CI}$} \\
        
        \multicolumn{1}{c}{$A_{k}$} & \multicolumn{3}{l}{set of all possible permeations of $k$ sources} \\

        \multicolumn{1}{c}{$h_{x_i,t}$} & \multicolumn{3}{l}{evidence provided by source $x_i$ at time period $t$} \\
        \multicolumn{1}{c}{$Y=\{y_{t}\}$} & \multicolumn{3}{l}{set of ground truth at time period $t$} \\
        \multicolumn{1}{c}{$g^{MC}_{s}$} & \multicolumn{3}{l}{set of discrete FMs sampled by MC with the size of $s$} \\

        \multicolumn{1}{c}{$\boldsymbol{g}_{l,s}$} & \multicolumn{3}{l}{set of top-performing discrete FMs} \\

        \multicolumn{1}{c}{$L_{l,s}$} & \multicolumn{3}{l}{lower bound of the $\overline{FM}_{CS}$} \\
        \multicolumn{1}{c}{$U_{l,s}$} & \multicolumn{3}{l}{upper bound of the $\overline{FM}_{CS}$} \\
        \multicolumn{1}{c}{$\bar{L}_{\alpha,l,s}$} & \multicolumn{3}{l}{confidence interval for the lower bound of the $\overline{FM}_{CS}$} \\
        \multicolumn{1}{c}{$\bar{U}_{\alpha,l,s}$} & \multicolumn{3}{l}{confidence interval for the upper bound of the $\overline{FM}_{CS}$} \\

        \multicolumn{1}{c}{$\bar{g}_{CA}$} &  \multicolumn{3}{l}{notation for the $\overline{FM}_{CA}$} \\
        \multicolumn{1}{c}{$\bar{g}_{l,s}$} &  \multicolumn{3}{l}{notation for the $\overline{FM}_{CS}$} \\
        \multicolumn{1}{c}{$\tilde{g}_{\alpha,l,s}$} &  \multicolumn{3}{l}{notation for the $\overline{FM}_{CI}$} \\
        \bottomrule
    \end{tabular}
    \label{Acronyms and notation}
\end{table}

\section{Background Knowledge}\label{sec2:background knowledge}

\subsection{Fuzzy Measure}
In a finite context which is the norm in practical applications, the fuzzy measure (FM) $g$ is a set function which maps the power set of $X$ to the given weights, i.e. $g:2^{X}\rightarrow [0,1]$, and follows these constraints: 
\begin{enumerate}
    \item Boundary conditions:\\
    $g(\varnothing)=0$ and $g(X)=1$
    \item Monotonicity:\\
    If $A\subseteq B\subseteq X$, then $g(A)\leq g(B)$
\end{enumerate}
Where $X=\{x_1,x_2,\cdots,x_{I}\}$ is a nonempty finite set and $x_{i}\in X$, $i=\{1,\cdots,I\}$ represent the $i^{th}$ information source. 

\subsubsection{Generating the Fuzzy Measure} \label{Generating the Fuzzy Measure}

One of the most popular approaches for parametrizing an FM is the Sugeno-$\lambda$ FM $(S_{\lambda}$-FM) \cite{Sugeno1993FM} :
\begin{equation} \label{sec2:sugeno measure}
g(A\cup B)=g(A) + g(B) + \lambda g(A)g(B) \text{,}
\end{equation}
where $A,B\subseteq X$, $A\cap B=\varnothing$, and $\lambda>-1$. Here, $\lambda$ can be obtained by solving

\begin{equation}\label{sec2:lambda solver}
\lambda + 1 = \prod_{i=1}^I(1+\lambda g(\{x_{i}\}))  \text{,}
\end{equation}
where $\lambda \in (-1,+\infty),\lambda \neq 0$.

In addition, a simpler approach to extract the FM is the $\perp$-decomposable FM (DFM) \cite{Siegfried1984FM} :
\begin{equation}
g(A\cup B) = g(A) \perp g(B) \text{,}
\end{equation}
where $A,B\subseteq X$, $A\cap B=\varnothing$ and $\perp$ represents a t-conorm. In this paper, the bounded sum \cite{Erich2005tconorm} 
\begin{equation} \label{decomposed measure min}
g(A\cup B) = min(g(A)+g(B),1) 
\end{equation}
and the maximum \cite{Erich2005tconorm}
\begin{equation} \label{decomposed measure max}
g(A\cup B) = max(g(A),g(B)) 
\end{equation}
are used as the t-conorm. We note them as $g_{DM,bs}$ and $g_{DM,max}$ respectively.   
As a reminder, the above two FM generation methods only use the densities and monotonicity constraint to identify discrete FMs. 

\subsubsection{Interval-valued FM}

In practice, an `ideal' discrete FM may not exist for a number of reasons, from noise in the available data to context-specific variability in the worths. Here, interval ($\overline{FM}$) may offer a more comprehensive representation of the underlying weighting structure.
Let $\bar{g}$ be an $\overline{FM}$, the monotonicity constraint of the $\overline{FM}$ referring to \cite{Caimei1995FM} is as follows:
\begin{align} \label{IFM_MC}
If\  A\subseteq B\subseteq X,\ then\  \bar{g}(A)\leq \bar{g}(B), \tilde{g}(A) \leq \tilde{g}(B).
\end{align}
In this constraint, $\bar{g}(A)$ and $\bar{g}(B)$ are two interval-valued numbers. 
According to \cite{Caimei1995FM}, 
\begin{align} \label{interval_larger}
\bar{g}(A)\leq \bar{g}(B)\ \ iff \ \ g^{-}(A) \leq g^{-}(B), g^{+}(A) \leq g^{+}(B),
\end{align}
where $g^{-}(A)$ represents the lower bound of $\bar{g}(A)$ and $g^{+}(A)$ represents the upper bound respectively.

\subsection{Fuzzy Integral}

The Choquet Fuzzy Integral (CFI) \cite{Murofushi1989ChoquetFI}
is one of the most common FI methods, fusing evidence in respect to the given FM $g$. Let $h:X\rightarrow R$ be the integrand, the CFI is defined as follows:

\begin{equation}
\int_{C}h\circ g=C_{g}(h)=\sum_{i=1}^{I}h(x_{\pi(i)})
(g(B_{i})-g(B_{i-1}))\text{,}
\label{Choquet FI}
\end{equation}
where $\pi$ is a permutation of $X$, such that
$h(x_{\pi(1)})\geq h(x_{\pi(2)})\geq \cdots \geq h(x_{\pi(I)})$,
$B_{i}=\{x_{\pi(1)},\cdots,x_{\pi(i)}\}$ and
$g(B_{0})=0$. 

Guo et al.\cite{Caimei1995FM} also introduced the concept of FI of interval-valued evidence in respect to the $\overline{FM}$. 
This paper only focuses on FI of discrete evidence based on the $\overline{FM}$. We treat discrete functions as the special case of interval-valued functions. The CFI of discrete functions in respect to the $\overline{FM}$ referring to \cite{Caimei1995FM} is as follow: 
\begin{align}
\int h\circ \bar{g}=C_{\bar{g}}(h)=[\int h\circ g^{-},\int h\circ g^{+}].
\label{Interval Choquet FI}
\end{align}

\section{Context-Agnostic Interval-valued FMs} \label{Generate interval-valued FM from densities}
Let $g(\{x_{i}\})$ be the worths of individual sources, i.e. the densities. This paper focuses on discrete numeric densities, which will be referred to simply as densities in the remainder of the paper. 

As discussed initially in \cite{Huang2025conference}, densities in combination with the FM's monotonicity constraint are insufficient to uniquely identify a discrete FM, but instead an interval-valued FM can be obtained. 
As this FM is context-agnostic in the sense of not being specified in respect to a specific FI or dataset, we refer to it as $\overline{FM}_{CA}$, using the notation $\bar{g}_{CA}$.  

\subsection{Deriving context-agnostic, interval-valued FMs}

Let $k$ reflect the arity of the given set-valued element within the FM and $A_{k}$ be the set that contains all possible permutations of $k$ sources.
For example, if $k=2$, then $A_{2}=\{\{x_{1},x_{2}\},\{x_{1},x_{3}\},\{x_{2},x_{3}\}\}$. $\overline{FM}_{CA}$ can be obtained as

\begin{align} \label{IFM_DM}
\bar{g}_{CA}(A) = 
\begin{cases}
\bigl[g(A),g(A)\bigr], & k=1 \\
\bigl[max_{x_{i}\in A}(g(\{x_{i}\})),1\bigr], & 1<k<I \\
\bigl[1,1\bigr], & k=I
\end{cases}, 
\end{align}
where $A\in A_{k}$, which means that $A$ is one of the combinations of $k$ sources. Note that, if $A\in A_{1}$, $g(A)$ is the same as $max_{x_{i}\in A}(g(\{x_{i}\}))$ and we simplify $[g(A),g(A)]$ to $g(A)$ and $[1,1]$ to $1$ throughout this paper.

\begin{theorem} \label{theorem:FM_CA_monotonocity}
$\overline{FM}_{CA}$ follows the monotonicity constraint.
\end{theorem}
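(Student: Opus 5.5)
We will verify the interval monotonicity condition \eqref{IFM_MC} in the form \eqref{interval_larger}. That is, for every pair $A\subseteq B\subseteq X$ we show $g^{-}_{CA}(A)\le g^{-}_{CA}(B)$ and $g^{+}_{CA}(A)\le g^{+}_{CA}(B)$, where the bounds are those given by \eqref{IFM_DM}. We will also check the boundary conditions $\bar g_{CA}(\varnothing)=0$ and $\bar g_{CA}(X)=1$, the latter being immediate from the case $k=I$. Throughout we assume that the densities lie in $[0,1]$, so that every lower bound in \eqref{IFM_DM} is at most the corresponding upper bound.

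\textbf{Upper bounds.} For a nonempty set $A$, the upper bound $g^{+}_{CA}(A)$ equals $g(A)\le 1$ when $|A|=1$ and equals $1$ otherwise. If $|B|\ge 2$, then $g^{+}_{CA}(B)=1\ge g^{+}_{CA}(A)$ for every $A$. If $|B|=1$, then any $A\subseteq B$ is either $\varnothing$ or $B$ itself, and both cases are trivial.

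\textbf{Lower bounds.} The key observation is that for every nonempty $A$ with $|A|<I$ the lower bound has the single form $g^{-}_{CA}(A)=\max_{x_i\in A} g(\{x_i\})$. The $k=1$ case is a special instance of this, as the paper itself notes. For $|A|=I$ the lower bound is $1$. Now take $\varnothing\ne A\subseteq B$. If $B=X$, then $g^{-}_{CA}(B)=1\ge g^{-}_{CA}(A)$, because every density, and hence every maximum of densities, is at most $1$. Otherwise $A$ and $B$ are both proper subsets of $X$, and the inequality $\max_{x_i\in A} g(\{x_i\})\le \max_{x_i\in B} g(\{x_i\})$ follows because the maximum is taken over a superset. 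For $A=\varnothing$ the value is $0$, and nonnegativity of the densities gives the inequality.

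The only place that needs care is the case split in \eqref{IFM_DM}: the comparison crosses the regimes $k=1$, $1<k<I$ and $k=I$. Unifying the first two regimes through the max formula, and handling $k=I$ through the bound $1$, removes this difficulty. Everything else is elementary.
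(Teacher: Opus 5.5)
Your proof is correct and takes essentially the same route as the paper's: both check the componentwise interval order endpoint by endpoint, using that a maximum of densities over a superset is at least the maximum over the subset, together with the bound $\max_{x_i\in A} g(\{x_i\})\le 1$. The paper instead organizes the check into three layer-to-layer comparisons (singleton to middle, middle to middle, middle to $X$). Your endpoint-wise organization also covers the singleton-versus-$X$ and $A=\varnothing$ cases, which the paper's list does not state explicitly.
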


\subsection{Sampling context-agnostic, discrete FMs}
Fig.~\ref{All possible FMs lattice for three sources} shows the proposed interval-valued FM lattice for three sources, where $g_{i}$ stand for $g(\{x_{i}\})$. 
Sampling from each interval within this FM produces a discrete FM. Note that the only information used to inform $\overline{FM}_{CA}$ is the densities and monotonicity constraint of the FM, and that conversely, this information in of itself provides only limited constraints on possible discrete FM values.

\begin{figure}[h]
    \centering
    \includegraphics[width=0.48\textwidth]{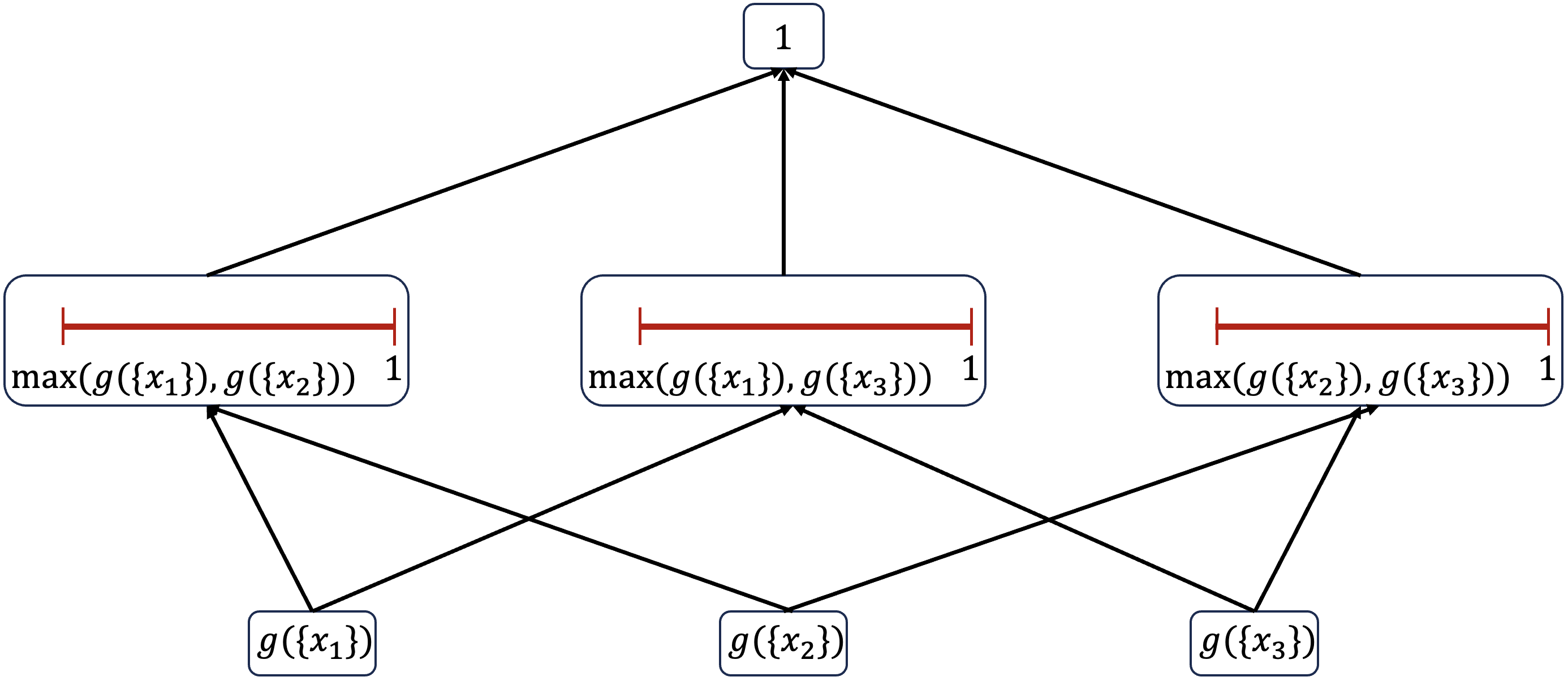}
    \caption{$\overline{FM}_{CA}$ that captures all possible discrete FMs generated from discrete densities for three sources.}
    \label{All possible FMs lattice for three sources}
\end{figure}

However, monotonicity may break depending on the values chosen. For example, in a FM lattice for four sources, assume that $\bar{g}(\{x_{1},x_{2}\})$ is $[0.7,1]$ and $\bar{g}(\{x_{1},x_{2},x_{3}\})$ is $[0.8,1]$. Although $\bar{g}(\{x_{1},x_{2}\}) \leq \bar{g}(\{x_{1},x_{2},x_{3}\})$, if we pick $g(\{x_{1},x_{2}\}) = 0.9 \in[0.7,1]$, $g(\{x_{1},x_{2},x_{3}\})=0.85 \in [0.8,1]$ to generate a discrete FM, the resulting FM does not follow the monotonicity constraints. 

To address this, while sampling discrete FMs, the lower bound of each interval corresponding to $A_{k}$ will be adjusted to the largest worths corresponding to the individual sources , i.e. the densities, in $A_{k}$. It can be obtained as
\begin{align} \label{adjust IFM sampling}
[max_{x_{i}\in \bigcup A_{k}}\bigl(g(\{x_{i}\})\bigr),1],
\end{align}
where $\bigcup A_{k}=\{a|\exists\ x\in A_{k},\ a\in x\}$. Note that, this adjusted interval is only used to sample context-agnostic, discrete FMs, the proposed $\overline{FM}_{CA}$ stays as shown in (\ref{IFM_DM}). 

Algorithm~\ref{Randomly generate numeric FMs from densities} shows a way to address this issue.

\begin{algorithm}[h]
\caption{Randomly generate numeric FMs from densities}
\begin{algorithmic}[1]
\STATE Input the densities as $g(\{x_{i}\})$
\FOR {$k = 2$ to $I$}
    \STATE Calculate the interval of combination $A$ refers to (\ref{adjust IFM sampling}), where $A\in A_{k}$.
    \STATE Randomly select a value from each intervals and use it as the discrete worth of corresponding combination.
\ENDFOR
\end{algorithmic}
\label{Randomly generate numeric FMs from densities}
\end{algorithm}

\subsection{Variability of traditional heuristic FM generation approaches}
Although FM generation approaches based on densities and monotonicity constraint place discrete FMs inside the $\overline{FM}_{CA}$, they are sometimes at quite different positions. 
To illustrate this phenomenon, we construct a synthetic example that also serves to introduce the methods proposed in the remainder of this paper. Here, we only present the synthetic densities, while the details of this example are introduced in the next section.

In a three-source fusion scenario, consider the densities $g(\{x_{1}\})=0.4$, $g(\{x_{2}\})=0.5$ and $g(\{x_{3}\})=0.6$. The worths in the second layer of the $\overline{FM}_{CA}$ lattice are $g(\{x_{1},x_{2}\})=[0.5,1]$, $g(\{x_{1},x_{3}\})=[0.6,1]$ and $g(\{x_{2},x_{3}\})=[0.6,1]$. The $S_{\lambda}$-FM, $g_{DM,bs}$ and $g_{DM,max}$ can be calculated according to (\ref{sec2:sugeno measure}), (\ref{decomposed measure min}) and (\ref{decomposed measure max}). Fig.~\ref{The locations of three traditional FMs within the IFM-DM} shows that these three traditional FM generation approaches place FMs inside the $\overline{FM}_{CA}$ but at quite different locations.

\begin{figure}[h]
    \centering
    \includegraphics[width=0.45\textwidth]{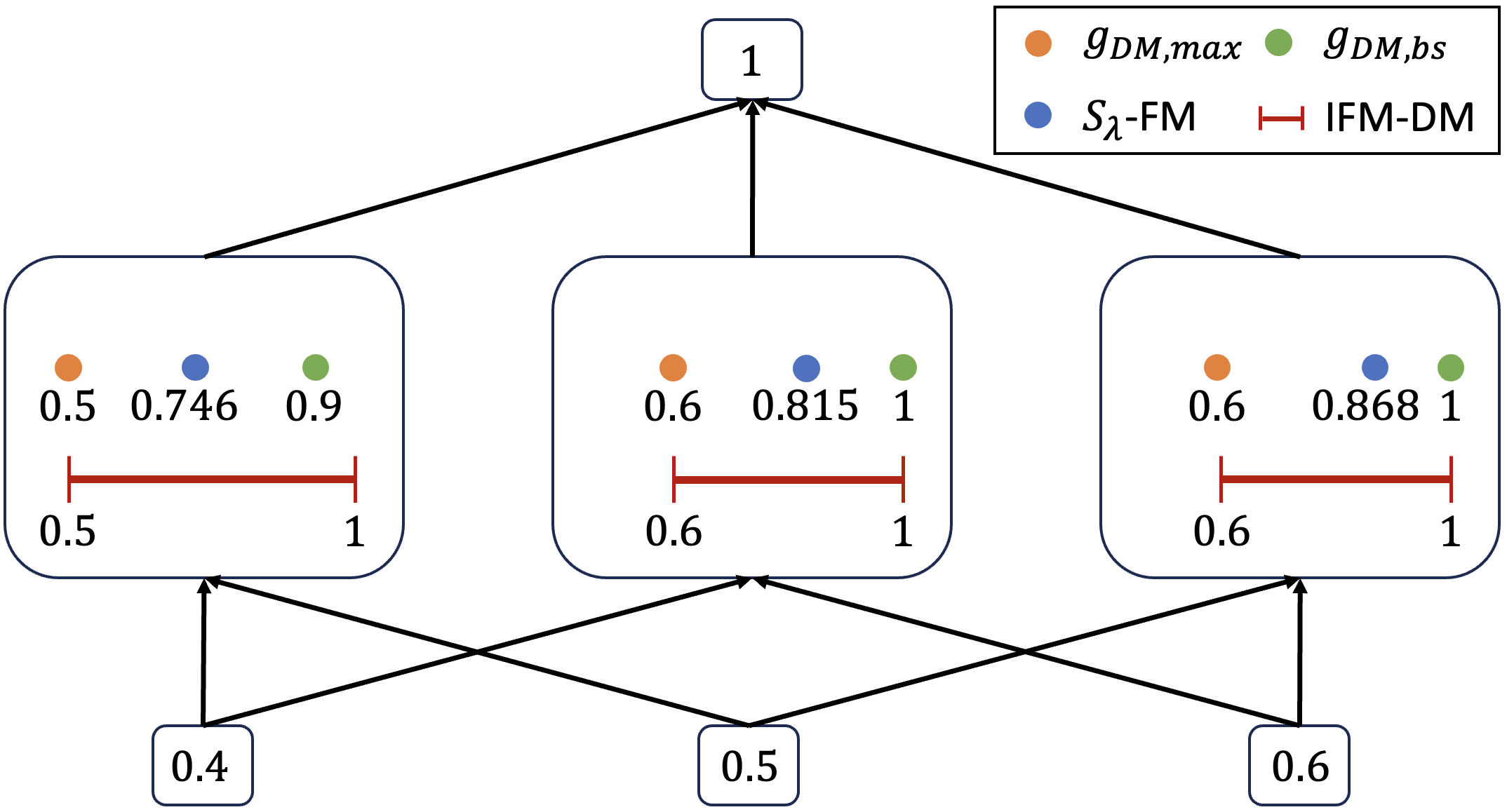}
    \caption{The locations of three traditional FMs within the $\overline{FM}_{CA}$.}
    \label{The locations of three traditional FMs within the IFM-DM}
\end{figure}

These four FMs are all context agnostic, i.e. independent of the choice of a specific FI.
When facing a specific task, once the FI and the dataset are determined, the intervals within the $\overline{FM}_{CA}$ can be further narrowed as discussed in the next Section. 
Since these three traditional FMs are within, but spread across the $\overline{FM}_{CA}$, we continue to explore whether they are \emph{also} within the narrower $\overline{FM}$. 

\section{Generating Context-Specific Interval-valued FMs} \label{Interval-valued FM based on a specific FI and a dataset}

In the previous section, we discussed how the limited information encoded within the densities and the FM's monotonicity constraint can inform associated general, but context-agnostic interval-valued FMs ($\overline{FM}_{CA}$). While providing valuable insight, including by contextualizing traditional discrete FMs, the resulting intervals were very wide, with limited real-world utility. In this section, we consider leveraging additional information to generate so-called context-aware FMs. Specifically, we propose a MC based approach to further narrow the original $\overline{FM}_{CA}$.

We first use MC to sample discrete FMs from $\overline{FM}_{CA}$, with each of the resulting FMs being context-agnostic in of itself.
Subsequently, given a specific context defined by a specifically selected FI and a dataset, we calculate the FI outputs in respect to these FMs and evaluate their performance in respect to the ground truth, e.g. label, of the dataset. 

A trivial approach to selecting the 'ideal' FM would be to directly select the FM with the best performance from the MC sample, but since both the dataset and the MC sample size are finite or perhaps the underlying `ideal' discrete FM itself contains inherent uncertainty, it is unlikely that the best possible FM for the given context was indeed identified through the MC process. 
To account for this, we adopt a performance threshold according to which FMs from the MC process which result in top performance (e.g. within the best $0.1\%$) within the given context, are selected.
We then use the spread of these discrete FMs to derive an interval-valued FM ($\overline{FM}_{CS}$) to cover them.

The remainder of this section provides a step by step introduction of the construction of $\overline{FM}_{CS}$ alongside with a synthetic example. 
We also explore whether traditional FMs are within $\overline{FM}_{CS}$.

\subsection{Step 1: leveraging a dataset} \label{subsec:Obtain the evidence}

Let $h_{x_i,t}$ 
represent the evidence, provided by source $x_i$, i.e. sensors, at time period $t$, where $i=\{1,\cdots,I\}$, $I$ denotes the number of sources and
$t=\{1,\cdots,T\}$, where $T$ denotes the total number of samples provided. Let $Y=\{y_{t}|y_{1},\cdots,y_{T}\}$ be the ground truth. 
Note that the ground truth can be considered as the overall rating in an evaluation task, overall/target output in a multi-sensors fusion task or label 

For the synthetic example, consider a three-sensor fusion tasks, where $h_{x_i,t}$ denotes the output of three distance sensors and $t$ indicates that the data are collected over time. 
The dataset is generated by grid sampling of three sensor outputs, each uniformly sampled from $[0,10]$ with 10 equally spaced values.
This results in $10\cdot10\cdot10 = 1000$ samples. The ground truth of the FM $g_{true}$, i.e. the `ideal' FM that is assumed to be the actual worth of all sub-combinations of these three sensors, is given in Table~\ref{The Ground truth FM for the synthetic example}. The CFI outputs in respect to the `ideal' FM are treated as the ground truth. 
Note that the `ideal' FM is a synthetic FM used only to generate the ground truth and demonstrate that it lies within the $\overline{FM}_{CS}$. 
In practice, the `ideal' FM is unknown, while the ground truth and evidence are generally known, albeit often subject to uncertainty.

\begin{table}[h]
    \centering
    \caption{The `ideal' FM for the synthetic example}
    \begin{tabular}{ccc}
        \toprule
        \boldmath{$k=1$} & \boldmath{$k=2$} & \boldmath{$k=3$} \\ 
        \midrule
        $x_{1}:0.4$      & $x_{1},x_{2} : 0.7$  & $X : 1$ \\ 
        $x_{2}: 0.5$     & $x_{1},x_{3} : 0.8$  &                                \\ 
        $x_{3}: 0.6$     & $x_{2},x_{3} : 0.9$  &                                \\ 
        \bottomrule
    \end{tabular}
    \label{The Ground truth FM for the synthetic example}
\end{table}

\subsection{Step 2: the role of the densities}

This paper focuses on exploring the potential of FMs generated from densities, the densities should first be determined by certain methods, e.g. expert knowledge, classification accuracy \cite{Tahani1990densities} or interCriteria analysis \cite{Sotirova2017densities}. Following the synthetic example, the densities are $g(\{x_{1}\})=0.4$, $g(\{x_{2}\})=0.5$ and $g(\{x_{3}\})=0.6$ given in Section~\ref{Generate interval-valued FM from densities} and can be considered as expert knowledge.

\subsection{Step 3: identification of the context-agnostic interval-valued FM}

Since the densities are obtained, $\overline{FM}_{CS}$ can be identified as introduced in Section~\ref{Generate interval-valued FM from densities}. 
Table~\ref{The IFM-DM for the synthetic example} shows the $\overline{FM}_{CS}$ for the synthetic example.

\begin{table}[h]
    \centering
    \caption{The $\overline{FM}_{CA}$ for the synthetic example}
    \begin{tabular}{ccc}
        \toprule
        \boldmath{$k=1$} & \boldmath{$k=2$} & \boldmath{$k=3$} \\ 
        \midrule
        $x_{1}:0.4$      & $x_{1},x_{2} : [0.5,1]$  & $X : 1$ \\ 
        $x_{2}: 0.5$     & $x_{1},x_{3} : [0.6,1]$  &                                \\ 
        $x_{3}: 0.6$     & $x_{2},x_{3} : [0.6,1]$  &                                \\ 
        \bottomrule
    \end{tabular}
    \label{The IFM-DM for the synthetic example}
\end{table}

\subsection{Step 4: generating discrete FMs using MC}

Monte Carlo (MC) is applied to sample discrete FMs from $\overline{FM}_{CS}$ following Algorithm~\ref{Randomly generate numeric FMs from densities}. Let $g^{MC}_{s}=\{g^{MC}_{j}|g^{MC}_{1},\cdots,g^{MC}_{s}\}$ be the set of discrete FMs sampled by MC with $s$ sample size, where $j=\{1,\cdots,s\}$. Note that $s$ is a hyperparameter.

Following the synthetic example, $s$ is set as $50000$. As shown in Table~\ref{The IFM-DM for the synthetic example}, the intervals within the $\overline{FM}_{CS}$ are $g(\{x_{1},x_{2}\})=[0.5,1]$, $g(\{x_{1},x_{3}\})=[0.6,1]$ and $g(\{x_{2},x_{3}\})=[0.6,1]$. 
MC randomly picks one value for each worth within the corresponding intervals, for example the first sample in first MC simulation can be $g^{MC}_{1}(\{x_{1},x_{2}\})=0.56$, $g^{MC}_{1}(\{x_{1},x_{3}\})=0.80$ and $g^{MC}_{1}(\{x_{2},x_{3}\})=0.75$. 

\subsection{Step 5: fusing the evidence using the FI}

Let $C_{g^{MC}_{j}}(h_{X,t})$ be the CFI output at time node $t$ in respect to $g^{MC}_{j}$. 
For example, assuming that $h_{x_{1},1} = 1$, $h_{x_{2},1} = 2$ and $h_{x_{3},1} = 3$, the CFI output of $g^{MC}_{1}$ is $C_{g^{MC}_{1}}(h_{X,1}) = 3\cdot0.6 + 2\cdot(0.75-0.6) + 1\cdot(1-0.75)=2.35$. 

\subsection{Step 6: extracting the set of top-performing discrete FMs} \label{subsec:Exploring the distribution of the FMs}

We evaluate the performance of $g^{MC}_{j}$ using the mean absolute error (MAE) between the CFI outputs in respect to the ground truth as

\begin{align} \label{calculate MAE}
MAE_{g^{MC}_{j}} = \frac{\sum_{t=1}^{T}|y_{t} - C_{g^{MC}_{j}}(h_{X,t})|}{T}.
\end{align}

The set of top-performing FMs $\boldsymbol{g}_{l,s}$ are retained in respect to the performance threshold $l$ by selecting the top $l\%$ FMs from $g^{MC}_{s}$. For example, if $s=50000$, $l=0.1$, then $\boldsymbol{g}_{0.1,50000}$ contains 50 discrete FMs which are the top $0.1\%$ performance FMs in $g^{MC}_{50000}$. 

Fig.~\ref{The distribution of all discrete FMs for the synthetic example} gives an example of where the best FMs are located for the synthetic example, based on a threshold $l=0.1$.
For illustrative purposes, the figure visualizes the performance of all discrete `FM' combinations, including those which are not valid FMs as they do not follow the monotonicity constraint (red background).
Note that the distribution of the FM performances indicates a peak, with the `ideal' FM (known in this synthetic case) at the tip of this peak. The top performing FMs retained are also shown and are grouped around the `ideal' FM.
Additionally, this figure only aims to visualize the distribution of all density-based FMs, including those that do not follow the monotonicity constraint. In the proposed approach, there is no need to generate FMs that break the monotonicity constraint.

\begin{figure*}[htbp]
	\centering
	\subfloat[$g(\{x_{1},x_{2}\})$]{\includegraphics[width=.63\columnwidth]{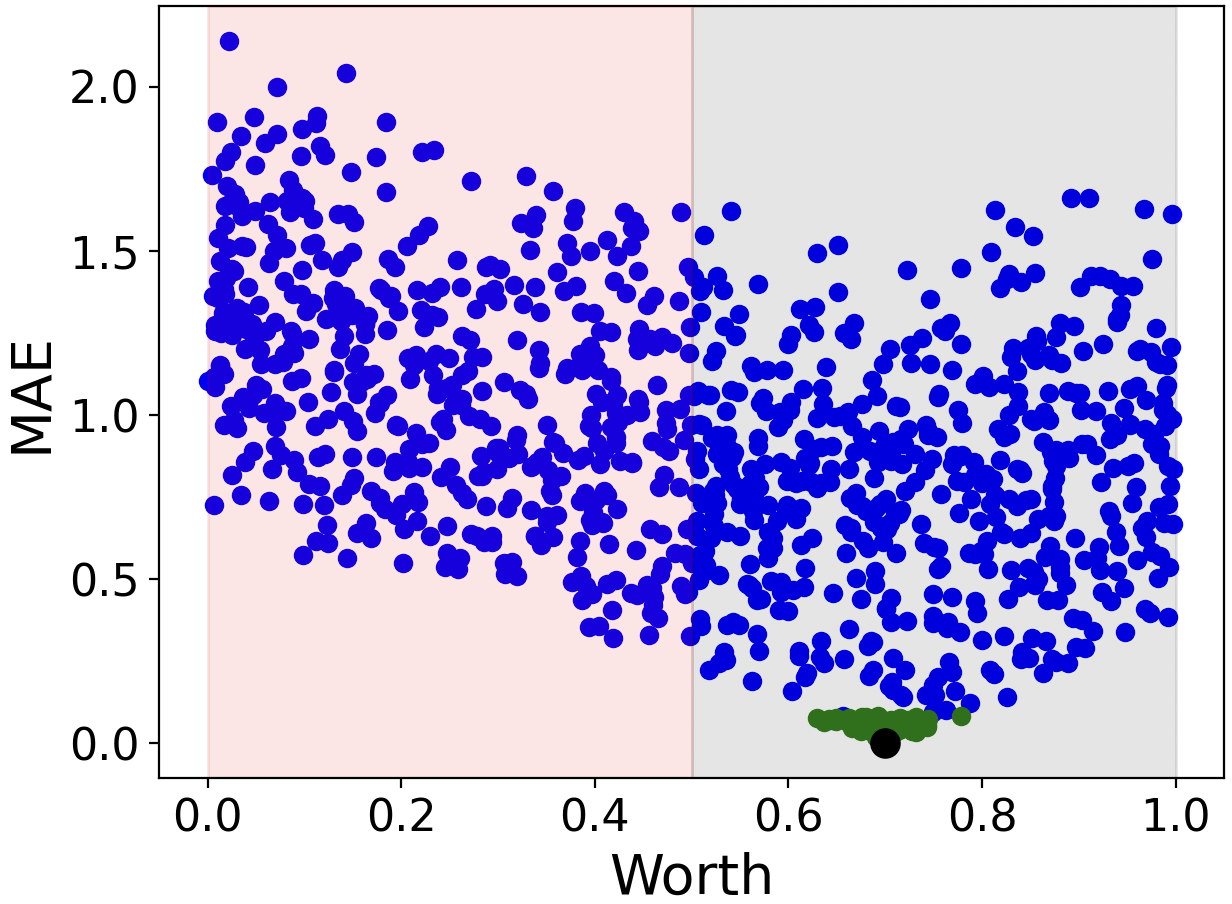}} 
    \hspace{5pt} 
	\subfloat[$g(\{x_{1},x_{3}\})$]{\includegraphics[width=.63\columnwidth]{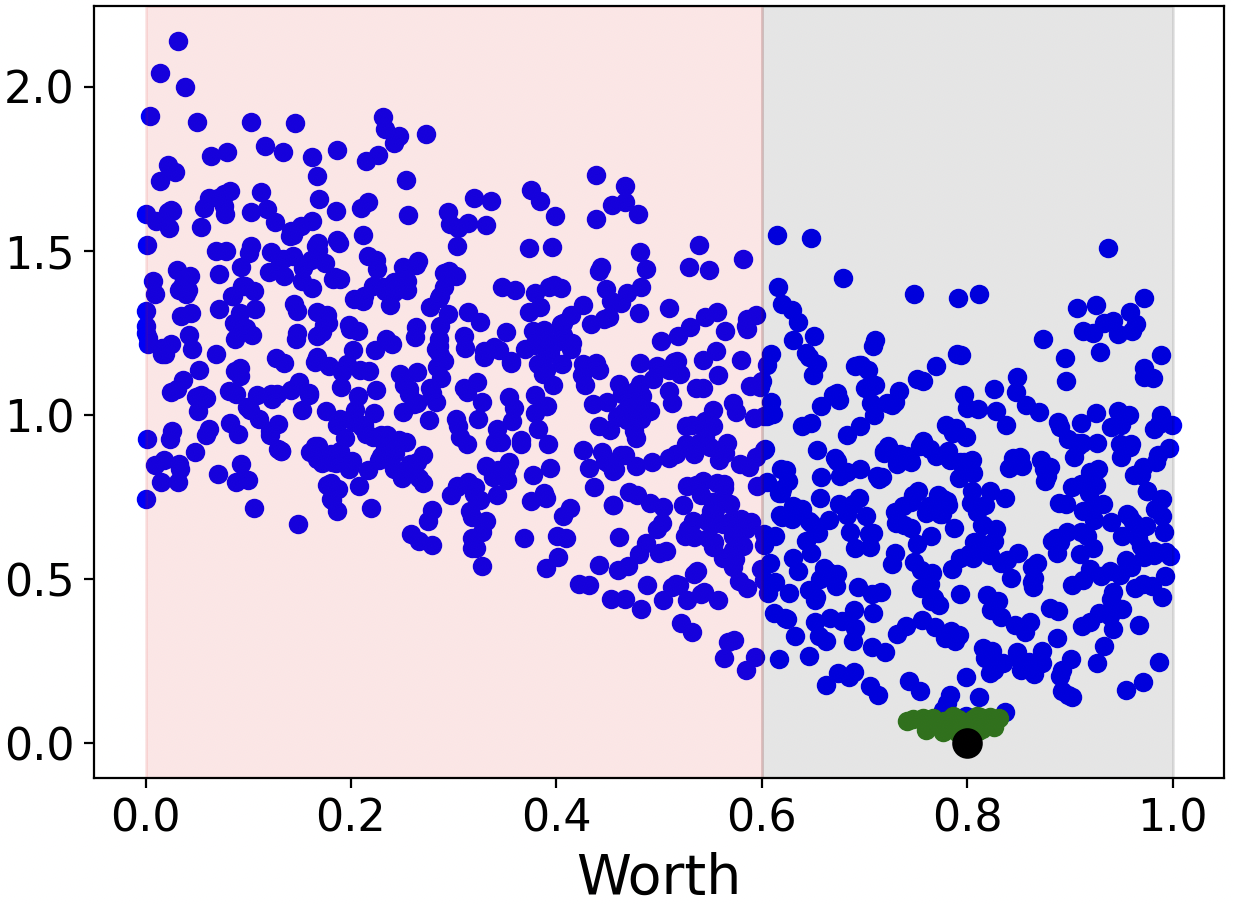}} 
    \hspace{5pt} 
	\subfloat[$g(\{x_{2},x_{3}\})$\label{all_g_0_to_1_g23}]{\includegraphics[width=.63\columnwidth]{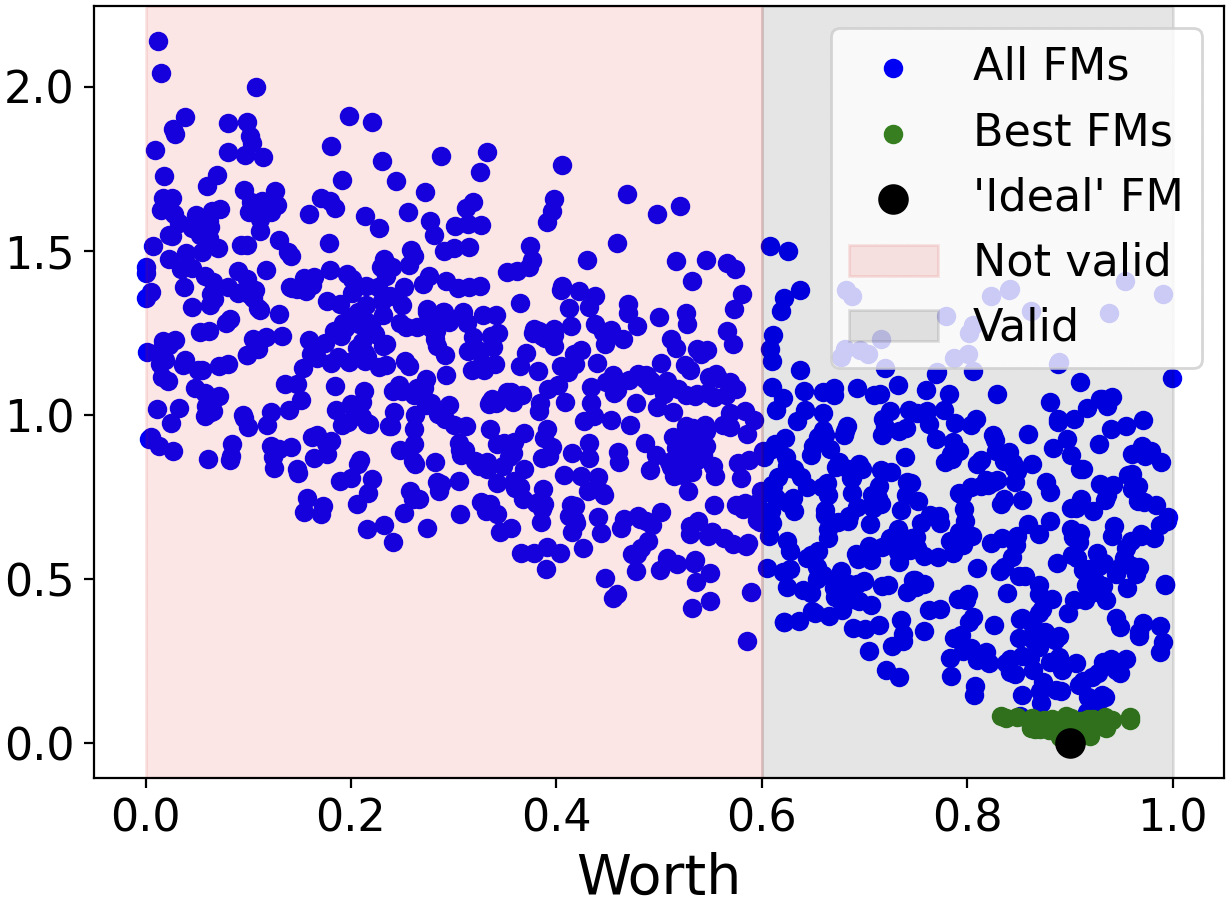}} 
	\caption{The distribution of all discrete FMs for the synthetic example.}
    \label{The distribution of all discrete FMs for the synthetic example}
\end{figure*}

Fig.~\ref{The distribution of $g^{*}_{0.1,50000}$ visualized by histogram and KDE} shows the distribution of $\boldsymbol{g}_{l,s}$ where $l=0.1$ and $s=50000$. The histogram represents the count and the blue curve represents the probability density function estimated by kernel density estimation (KDE). The three vertical lines represent the values according to the label, where `Ground Truth' represents the `ideal' FM in Table~\ref{The Ground truth FM for the synthetic example}, `Mean' represents the basic average of $\boldsymbol{g}_{l,s}$ and `Median' represents the median value of $\boldsymbol{g}_{l,s}$.

\begin{figure*}[htbp]
	\centering
	\subfloat[$g(\{x_{1},x_{2}\})$\label{synthetic_kde_50000/g12}]{\includegraphics[width=.63\columnwidth]{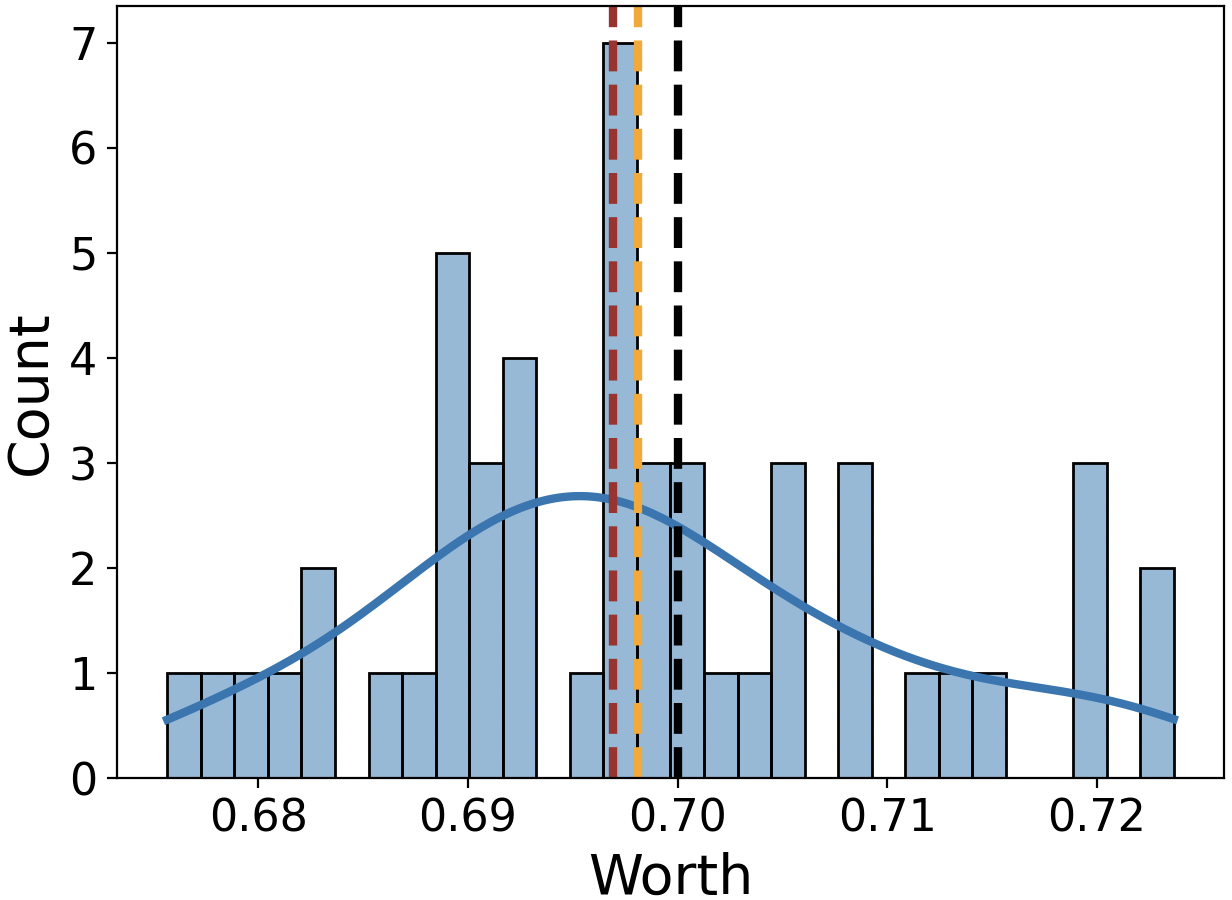}} 
    \hspace{5pt} 
	\subfloat[$g(\{x_{1},x_{3}\})$\label{synthetic_kde_50000/g13}]{\includegraphics[width=.63\columnwidth]{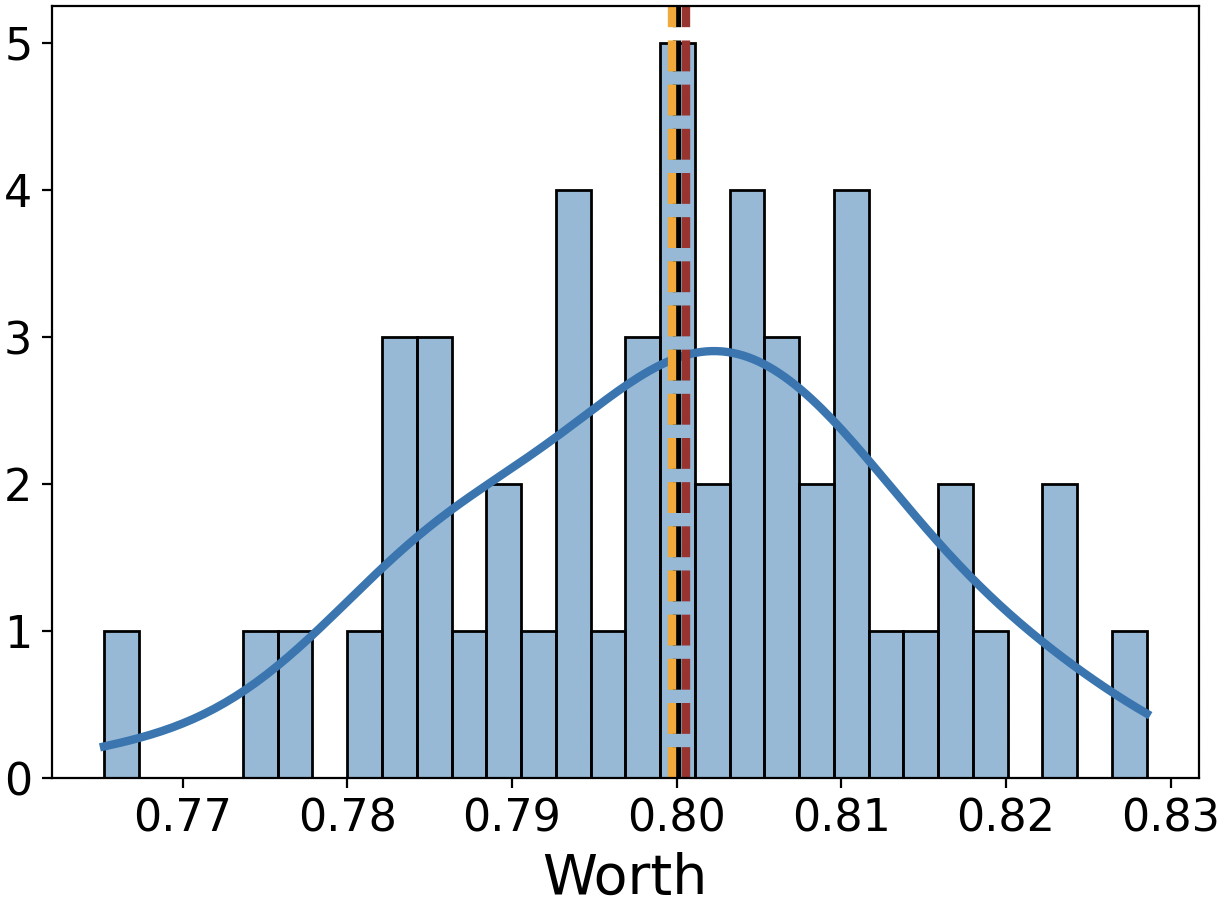}} 
    \hspace{5pt} 
	\subfloat[$g(\{x_{2},x_{3}\})$\label{synthetic_kde_50000/g23}]{\includegraphics[width=.63\columnwidth]{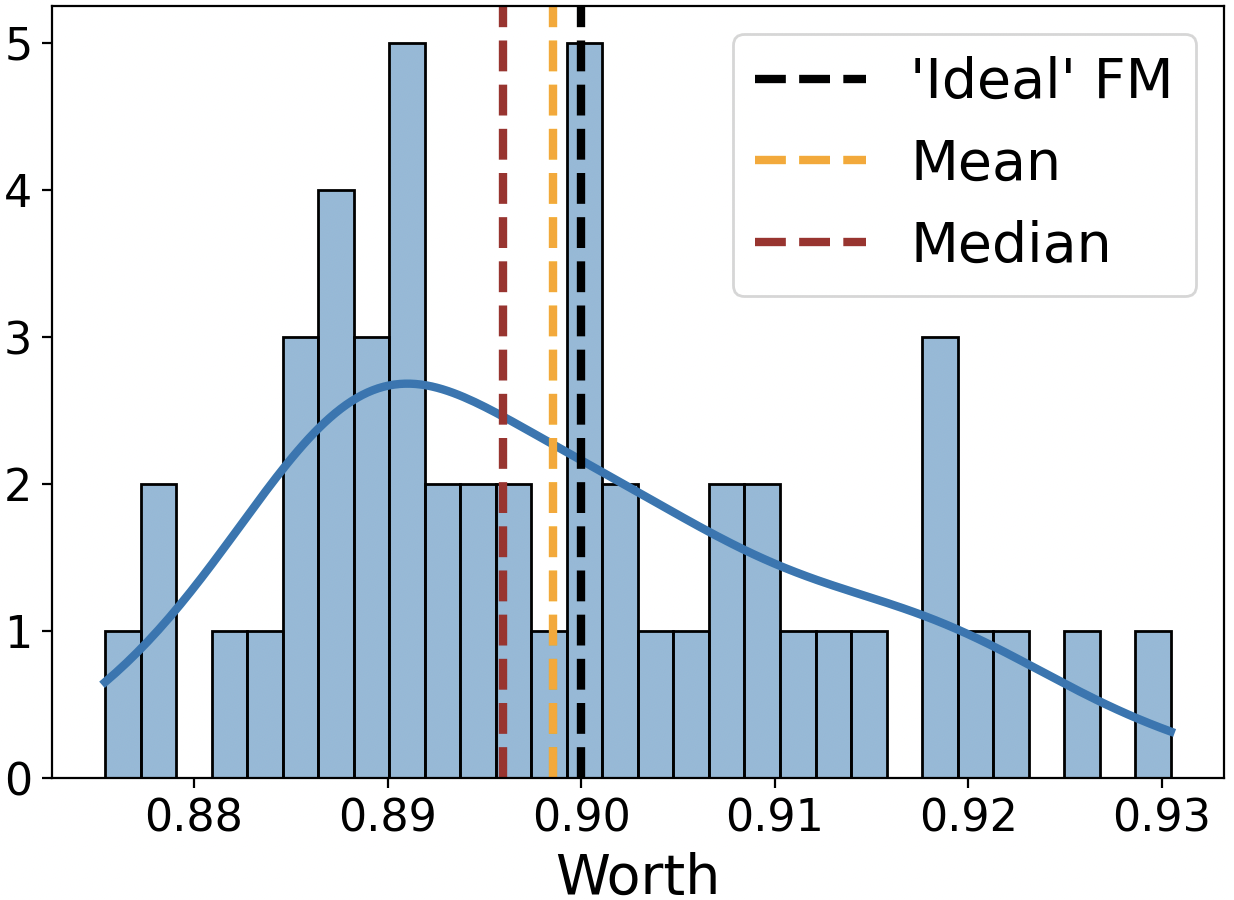}} 
	\caption{The distribution of $\boldsymbol{g}_{0.1,50000}$ visualized by histogram and kernel density estimation.}
    \label{The distribution of $g^{*}_{0.1,50000}$ visualized by histogram and KDE}
\end{figure*}

It can be observed that the maximum point of the blue curve (indicating that samples from this distribution are most likely to lie near this value) is not located at the ground truth. This may be due to the finite MC sample size or the dataset is finite, which can bias the identification of these top-performing FMs.
This phenomenon indicates that it is uncertain where exactly the ground truth lies among these top-performing FMs (which would lead to a discrete FM). However, it is likely that the ground truth lies within the range spanned by these top-performing FMs (which would lead to an $\overline{FM}$). Therefore, rather than a discrete FM, an $\overline{FM}$ is a good representation for data-driven FMs.

Furthermore, the mean point and the median point are not necessarily located at the ground truth, which indicates that to achieve the top-performance FM in this rather simple example, there is still quite substantial scope for individual FM values to trade off against each other. This again underlines that an $\overline{FM}$ is a good representation for data-driven FMs

\subsection{Step 7: generating context-specific interval-valued FM from the top-performing set}

As discussed in the previous subsection, it is likely that the ground truth lies within the range spanned by these top-performing FMs. For each combination of sub-sources, we directly use the minimum value within the corresponding top-performing set as the lower bound of the interval, and the maximum value within this set as the upper bound. 
Note that it is possible to use the $2.5\%$, $5\%$, $25\%$ quantiles to act as the lower bound and the $97.5\%$, $95\%$, $75\%$ quantiles to act as the upper bound, respectively.
Further exploration in using quantiles as the bounds of the interval is left for future work.

Let $\bar{g}_{l,s}$ represent the $\overline{FM}_{CS}$, it can be defined as
\begin{align} 
\bar{g}_{l,s}(A) = 
\begin{cases}
\bigl[g(A),g(A)\bigr], & k=1 \\
\bigl[\min_{g\in\boldsymbol{g}_{l,s}}g(A),\max_{g\in\boldsymbol{g}_{l,s}}g(A) \bigr], & 1<k<I \\
\bigl[1,1\bigr], & k=I
\end{cases}, 
\end{align}
where $A\subseteq X$.
Following the synthetic example, the resulting $\overline{FM}_{CS}$ is shown in Table~\ref{The IFM-CS for the synthetic example}, where $l=0.1$ and $s=50000$.

\begin{table}[h]
    \centering
    \caption{The $\overline{FM}_{CS}$ for the synthetic example}
    \begin{tabular}{ccc}
        \toprule
        \boldmath{$k=1$} & \boldmath{$k=2$} & \boldmath{$k=3$} \\ 
        \midrule
        $x_{1}:0.4$      & $x_{1},x_{2} : [0.676,0.724]$  & $X : 1$ \\ 
        $x_{2}: 0.5$     & $x_{1},x_{3} : [0.765,0.829]$  &                                \\ 
        $x_{3}: 0.6$     & $x_{2},x_{3} : [0.875,0.930]$  &                                \\ 
        \bottomrule
    \end{tabular}
    \label{The IFM-CS for the synthetic example}
\end{table}

\section{Generating Context-Specific Confidence Interval-valued FMs} \label{Generating Context-Specific Confidence Interval-valued FMs}

In the previous section, we introduce the generation of context-specific interval-valued FMs ($\overline{FM}_{CS}$). As the $\overline{FM}_{CS}$ is generated based on MC samples, if we repeat the generation process, the resulting $\overline{FM}_{CS}$ would be different. In addition, we argue that the ground truth, i.e. the `ideal' FM, is more likely to lie within $\overline{FM}_{CS}$, but we do not know \textit{\textbf{how certain}} the `ideal' FM is actually within $\overline{FM}_{CS}$.

Here, we extend the generation of the $\overline{FM}_{CS}$ by adding the significant level $\alpha$ to make the resulting $\overline{FM}$ acts as the confidence interval for the `ideal' FM. We call it the context-specific confidence interval-valued FM ($\overline{FM}_{CI}$). It can be interpreted that if we repeat the generation of the $\overline{FM}_{CI}$ to generate multiple $\overline{FM}_{CI}$, $(1-\alpha)\cdot 100\%$ of them cover the `ideal' FM.

The remainder of this section provides introduction of the construction of $\overline{FM}_{CS}$ alongside with the same synthetic example in the previous section. 
We also provide experiments to test whether the resulting $\overline{FM}_{CS}$ actually covers the `ideal' FM under the given significant level using the synthetic example.

\subsection{Generating confidence intervals for the lower and upper bounds of $\overline{FM}_{CS}$}

As discussed in the previous section, we use the minimum points and the maximum points of the top-performing FM set as the lower bounds and the upper bounds of the intervals in $\overline{FM}_{CS}$. 
Here, the top-performing FMs are obtained by MC, and as the MC sample size is finite, the top-performing FMs would be different under resampling, leading to different lower and upper bounds of $\overline{FM}_{CS}$.
Therefore, to quantify this sampling variability, we construct CIs for the lower and upper bounds of $\overline{FM}_{CS}$ using the bootstrap method \cite{Efron1979bootstrap}.

Let $A\subseteq X$, $\bar{L}_{\alpha,l,s}(A)$ be the CI for the lower bound and $\bar{U}_{\alpha,l,s}(A)$ be the CI for the upper bound.
They are formed as
\begin{align}
&\bar{L}_{\alpha,l,s}(A)=[L^{-}_{\alpha,l,s}(A),L^{+}_{\alpha,l,s}(A)], \notag \\
&\bar{U}_{\alpha,l,s}(A) = [U^{-}_{\alpha,l,s}(A),U^{+}_{\alpha,l,s}(A)], \notag 
\end{align}
where $\alpha$ represents the significant level ($1-\alpha$ represents the confidence level). Note that, if $|A|=1$ or $A=X$, it is unnecessary to construct the CIs.
We use the percentile method \cite{Bradley1987bootstrap} to construct two-sided bootstrap CIs, and the resample size of the bootstrap method is set as $1000$ as recommanded in \cite{Bradley1987bootstrap}.

Following the synthetic example, Table~\ref{The confidence intervals for the lower and upper bounds} shows the CIs for the lower and upper bounds, where $\bar{L}$ represents the lower bound and $\bar{U}$ represents the upper bound.

\begin{table}[h]
    \centering
    \caption{The confidence intervals for the lower and upper bounds of $\overline{FM}_{CS}$ based on the synthetic example}
    \begin{tabular}{c|c|ccc}
        \toprule
$\bar{g}_{l,s}$                               &           &   \boldmath{$\alpha=0.01$} & \boldmath{$\alpha=0.05$} & \boldmath{$\alpha=0.1$} \\ 
        \midrule
\multirow{2}{*}{$x_{1},x_{2}$}  & $\bar{L}$ &   $[0.671, 0.683]$     & $[0.671, 0.681]$  & $[0.671, 0.681]$ \\ 
                                & $\bar{U}$ &   $[0.715, 0.724]$     & $[0.719, 0.724]$  & $[0.719, 0.724]$ \\ 
        \midrule
\multirow{2}{*}{$x_{1},x_{3}$}  & $\bar{L}$ &   $[0.765, 0.782]$     & $[0.765, 0.781]$  & $[0.765, 0.781]$ \\ 
                                & $\bar{U}$ &   $[0.818, 0.836]$     & $[0.820, 0.829]$  & $[0.823, 0.829]$ \\ 
        \midrule
\multirow{2}{*}{$x_{2},x_{3}$}  & $\bar{L}$ &   $[0.869, 0.884]$     & $[0.869, 0.879]$  & $[0.869, 0.879]$ \\ 
                                & $\bar{U}$ &   $[0.918, 0.930]$     & $[0.918, 0.930]$  & $[0.920, 0.930]$ \\ 
        \bottomrule
    \end{tabular}
    \label{The confidence intervals for the lower and upper bounds}
\end{table}

\subsection{Identification of context-specific confidence interval-valued FM} 
\label{Identification of context-specific confidence interval-valued FM}

Let $\tilde{g}_{\alpha,l,s}$ be the $\overline{FM}_{CI}$, it is defined as

\begin{align} \label{IFM_CI}
\tilde{g}_{\alpha,l,s}(A) = 
\begin{cases}
\bigl[g(A),g(A)\bigr], & k=1 \\
\bigl[L^{-}_{\alpha,l,s}(A),U^{+}_{\alpha,l,s}(A)\bigr], & 1<k<I \\
\bigl[1,1\bigr], & k=I
\end{cases}, 
\end{align}
where $A\in A_{k}$. Since the $\overline{FM}_{CI}$ is constructed based on bootstrap method, in some extreme scenario, it could be possible that it breaks the monotonicity constraint. To address this, let $A\subset B\subset X$, if $L^{-}_{\alpha,l,s}(A) >L^{-}_{\alpha,l,s}(B)$, we set $L^{-}_{\alpha,l,s}(B)$ as $L^{-}_{\alpha,l,s}(A)$ to fix the monotonicity, and $U^{+}_{\alpha,l,s}(B)$ similarly.

\begin{theorem} \label{IFM-IE_theo_momotonicity}
$\overline{FM}_{CI}$ follows the monotonicity constraint.
\end{theorem}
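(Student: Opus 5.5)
We have to show that for all $A\subseteq B\subseteq X$ the interval $\tilde{g}_{\alpha,l,s}(A)$ lies below $\tilde{g}_{\alpha,l,s}(B)$ in the sense of (\ref{interval_larger}). That is, we need $L^{-}_{\alpha,l,s}(A)\le L^{-}_{\alpha,l,s}(B)$ and $U^{+}_{\alpha,l,s}(A)\le U^{+}_{\alpha,l,s}(B)$, where we use the convention $L^{-}=U^{+}=g(A)$ when $|A|=1$ and $L^{-}=U^{+}=1$ for $A=X$. Equality $A=B$ is trivial, so we assume $A\subsetneq B$. The proof is a case analysis on the arity of $A$ and $B$, followed by an argument that the repair step in Section~\ref{Identification of context-specific confidence interval-valued FM} produces monotone bound functions.

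\textbf{Boundary cases.} First, if $B=X$, monotonicity holds because every bound lies in $[0,1]$. Each $L^{-}_{\alpha,l,s}(A)$ and $U^{+}_{\alpha,l,s}(A)$ is a bootstrap percentile of minima or maxima of values sampled via Algorithm~\ref{Randomly generate numeric FMs from densities}, and these values lie in $[0,1]$; the repair only raises values to other such values. Second, if $|A|=1$, say $A=\{x_i\}$, we need $g(\{x_i\})\le L^{-}_{\alpha,l,s}(B)\le U^{+}_{\alpha,l,s}(B)$. Every sampled FM assigns $B$ a value in the interval (\ref{adjust IFM sampling}), whose lower end is at least $\max_{x_j\in B} g(\{x_j\})\ge g(\{x_i\})$. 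So every minimum over a bootstrap resample of $\boldsymbol{g}_{l,s}$ is at least $g(\{x_i\})$, hence so is any percentile of these minima, and the repair cannot decrease it. Finally, $L^{-}\le U^{+}$ holds because each resample's minimum is at most its maximum, which gives a percentile ordering.

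\textbf{Intermediate arities.} The substantive case is $1<|A|<|B|<I$. Here I would argue that the repair rule, applied layer by layer in increasing $k$, yields monotone functions; the set-up is the same for $L^{-}$ and $U^{+}$, so I treat $L^{-}$. Process sets in order of increasing cardinality. When a set $B$ is processed, replace $L^{-}(B)$ by $\max\{L^{-}(B),\max_{A\subsetneq B}L^{-}(A)\}$, using the already-repaired values of its subsets. This is the natural reading of ``set $L^{-}(B)$ as $L^{-}(A)$ whenever $L^{-}(A)>L^{-}(B)$''. By induction on $|B|$, the repaired value dominates every proper subset's repaired value, and this gives monotonicity directly. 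The same induction shows the repair never breaks the singleton and $X$ cases above.

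\textbf{Main obstacle.} The obstacle is that the paper states the repair rule informally and pairwise. Applied in an arbitrary order, raising $L^{-}(B)$ could leave some superset $C\supset B$ with an already-fixed smaller value. So the proof must fix the order by increasing cardinality, or equivalently define the repaired bound as $\max_{A'\subseteq B}L^{-}(A')$, which is manifestly monotone. A second subtlety is the cross-condition $L^{-}(B)\le U^{+}(B)$ after repair, which must hold for the result to be an interval. This follows because $L^{-}(A')\le U^{+}(A')$ for each $A'$, so
\[
\max_{A'\subseteq B}L^{-}(A')\le \max_{A'\subseteq B}U^{+}(A').
\]
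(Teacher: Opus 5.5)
Your proof is correct and rests on the same idea as the paper's, whose entire argument is that monotonicity is enforced by the manual repair step. You make this rigorous in three ways: you handle the singleton and $X$ cases separately (the paper's rule, stated for $A\subset B\subset X$, does not literally cover $B=X$), you define the repaired bound as the closure $\max_{A'\subseteq B}L^{-}_{\alpha,l,s}(A')$, and you check that the repaired bounds still form intervals.

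A minor remark: the ordering issue you flag does not arise for the paper's rule as stated. That rule ranges over all pairs $A\subset B$, not only adjacent layers, so each superset is compared directly with every subset, and values only ever increase. Hence a single pass in any order already produces exactly your closure $\max_{A'\subseteq B}L^{-}_{\alpha,l,s}(A')$.
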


Assume that $L_{l,s}$ and $U_{l,s}$ are the ground truth of lower and upper bounds of the $\overline{FM}_{CS}$. In other words, $\bar{L}_{\alpha,l,s}(A)$ is the CI for $L_{l,s}(A)$ (the lower bound of the $\overline{FM}_{CS}$), and $\bar{U}_{\alpha,l,s}(A)$ is the CI for $U_{l,s}(A)$ (the upper bound of the $\overline{FM}_{CS}$), where $A\in A_{k},\ k\in(1,I)$.

\begin{theorem} \label{theorem IFM-IE as CI}
Let $\tilde{g}_{\alpha,l,s}$ be a $\overline{FM}_{CI}$ and $\bar{g}_{l,s}$ be a $\overline{FM}_{CS}$. $\forall A\in A_{k},\ k\in(1,I)$, $\tilde{g}_{\alpha,l,s}(A)$ can be regarded as the confidence interval for $\bar{g}_{l,s}(A)$ with a confidence level of $1-\alpha+\frac{\alpha^{2}}{4}$ as
\begin{align}
P\left(\bar{g}_{l,s}(A)\subseteq \tilde{g}_{\alpha,l,s}(A)\right)=1-\alpha+\frac{\alpha^{2}}{4}. 
\end{align}
\end{theorem}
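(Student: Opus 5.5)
The plan is to reduce the interval-inclusion event to two one-sided events, one for each endpoint, and then combine them under an independence assumption.

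Fix $A\in A_k$ with $1<k<I$. By the definition of $\tilde{g}_{\alpha,l,s}$ in (\ref{IFM_CI}), and reading $\bar{g}_{l,s}(A)$ as the interval $[L_{l,s}(A),U_{l,s}(A)]$ with the ground-truth bounds assumed above, the inclusion $\bar{g}_{l,s}(A)\subseteq\tilde{g}_{\alpha,l,s}(A)$ holds exactly when
\begin{align}
L^{-}_{\alpha,l,s}(A)\leq L_{l,s}(A) \quad\text{and}\quad U_{l,s}(A)\leq U^{+}_{\alpha,l,s}(A). \notag
\end{align}
Call these events $E_L$ and $E_U$. The first step is therefore to rewrite the target probability as $P(E_L\cap E_U)$.

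Next, I would compute $P(E_L)$ and $P(E_U)$ from the percentile bootstrap construction. $\bar{L}_{\alpha,l,s}(A)$ is a two-sided $(1-\alpha)$ interval with equal tails, so under the usual percentile-method justification it misses $L_{l,s}(A)$ from below with probability $\alpha/2$ and from above with probability $\alpha/2$. The event $E_L$ only fails on the first of these, so $P(E_L)=1-\alpha/2$. Symmetrically, $P(E_U)=1-\alpha/2$.

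Finally, I would treat the bootstrap intervals for the lower and upper bounds as independent, using separate bootstrap resamplings for $\min$ and $\max$. This gives
\begin{align}
P(E_L\cap E_U)=\left(1-\tfrac{\alpha}{2}\right)^2=1-\alpha+\tfrac{\alpha^2}{4}. \notag
\end{align}
I would also check that the monotonicity repair applied after (\ref{IFM_CI}) does not spoil the result. That repair only lowers $L^{-}$ values and raises $U^{+}$ values in the cases that matter, so it can only enlarge $\tilde{g}_{\alpha,l,s}(A)$ and cannot decrease the coverage probability. This is consistent with Theorem~\ref{IFM-IE_theo_momotonicity}. Strictly, the repair would turn the equality into a $\geq$; I would state the equality for the unrepaired construction and note the inequality afterwards.

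The main obstacle is the independence step. The sample minimum and maximum come from the same top-performing set $\boldsymbol{g}_{l,s}$, so $E_L$ and $E_U$ are generally correlated. If the two bounds share a single bootstrap, the exact value is not justified; what survives is only the Bonferroni bound $P\ge 1-\alpha$. I would therefore make independence an explicit modelling assumption, implemented by separate resamplings, and state it alongside the exact coverage of the percentile intervals. Those two assumptions are the hypotheses under which the claimed equality holds.
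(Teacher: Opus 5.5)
Your skeleton is the paper's. You write the inclusion as $E_L\cap E_U$ with $E_L=\{L^{-}_{\alpha,l,s}(A)\le L_{l,s}(A)\}$ and $E_U=\{U^{+}_{\alpha,l,s}(A)\ge U_{l,s}(A)\}$, give each probability $1-\alpha/2$ from the equal-tailed percentile interval, and multiply. The paper performs that multiplication with no justification. Your point that it needs independence, and that without it only $1-\alpha\le P(E_L\cap E_U)\le 1-\alpha/2$ follows, names exactly the hypothesis the paper's equality silently uses.

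Your proposed way of securing independence does not work, however. The probability in the theorem is over the Monte Carlo sample $g^{MC}_{s}$; the paper's coverage test in Algorithm~\ref{Calculation for the actual confidence level} regenerates $g^{MC}_{s}$ each time. With $BT=1000$, both $L^{-}_{\alpha,l,s}(A)$ and $U^{+}_{\alpha,l,s}(A)$ are, up to bootstrap noise, functions of that single sample through the same top-performing set. Separate resamplings for the minimum and maximum only make the endpoints conditionally independent given $g^{MC}_{s}$. The coverage is then the average over $g^{MC}_{s}$ of $P(E_L\mid g^{MC}_{s})\,P(E_U\mid g^{MC}_{s})$, and these two factors are correlated functions of the same sample. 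For example, a sample whose top-performing set sits high pushes both endpoints up, making $E_L$ less likely and $E_U$ more likely. Genuine independence would require computing the two endpoints from independent MC samples; otherwise independence must simply be postulated, as the paper tacitly does.

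Your treatment of the monotonicity repair has the direction wrong. As stated after (\ref{IFM_CI}), when $A\subset B$ and $L^{-}_{\alpha,l,s}(A)>L^{-}_{\alpha,l,s}(B)$, the repair \emph{raises} $L^{-}_{\alpha,l,s}(B)$ to $L^{-}_{\alpha,l,s}(A)$. That narrows $\tilde{g}_{\alpha,l,s}(B)$ from below and can only lower the probability of the lower-endpoint event; only the upper-endpoint repair enlarges the interval. So the repair does not turn the equality into a $\ge$.

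What does hold is the following. $L_{l,s}$ is monotone, being a minimum over monotone FMs. Hence the repaired lower endpoint of $B$ can exceed $L_{l,s}(B)$ only if some unrepaired $L^{-}_{\alpha,l,s}(A')$ exceeds $L_{l,s}(A')$ with $A'\subseteq B$. This gives a union-bound estimate, not the stated value. The paper's proof ignores the repair and argues only about the unrepaired interval, which is the only version your conditional equality covers.
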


Typically, $\alpha$ is set as $0.01$, $0.05$ or $0.1$, then the confidence level of $\tilde{g}_{\alpha,l,s}(A)$ is $0.990025$, $0.959625$ or $0.9025$ respectively, which can be treated as $0.99$, $0.95$ or $0.90$. In other words, if $\alpha$ is set as $0.01$, $0.05$ or $0.1$, $\tilde{g}_{\alpha,l,s}(A)$ can be regarded as the $99\%$, $95\%$ or $90\%$ confidence interval for $\bar{g}_{l,s}(A)$.

Note that, the $\overline{FM}_{CI}$ is the CI for the $\overline{FM}_{CS}$. And, as discussed in the previous section, the $\overline{FM}_{CS}$ is structured based on top-performing FMs, where the `ideal' FM is more likely to be located in it. Building on these, the $\overline{FM}_{CI}$ is regarded as the confidence interval for the `ideal' FM at
the \textit{\textbf{same}} $1-\alpha$ confidence level.

Fig.~\ref{consrtuction demo} provides a visual demonstration for the construction of $\tilde{g}_{\alpha,l,s}(\{x_{2},x_{3}\})$ in the $\overline{FM}_{CI}$ together with all the mid-step intervals used in this approach. The points in green and blue (top-perform FMs and all FMs) are the same as shown in Fig.~\ref{all_g_0_to_1_g23}, but here we pick some of the points for visualization.
Algorithm~\ref{Obtain confidence intervals using bootstrap method} also provides the construction of the $\overline{FM}_{CI}$ including the bootstrap process. Note that $BT$ represents the bootstrap resample size and is set to $1000$ as discussed in the previous subsection.

\begin{figure}[h]
    \centering
    \includegraphics[width=0.45\textwidth]{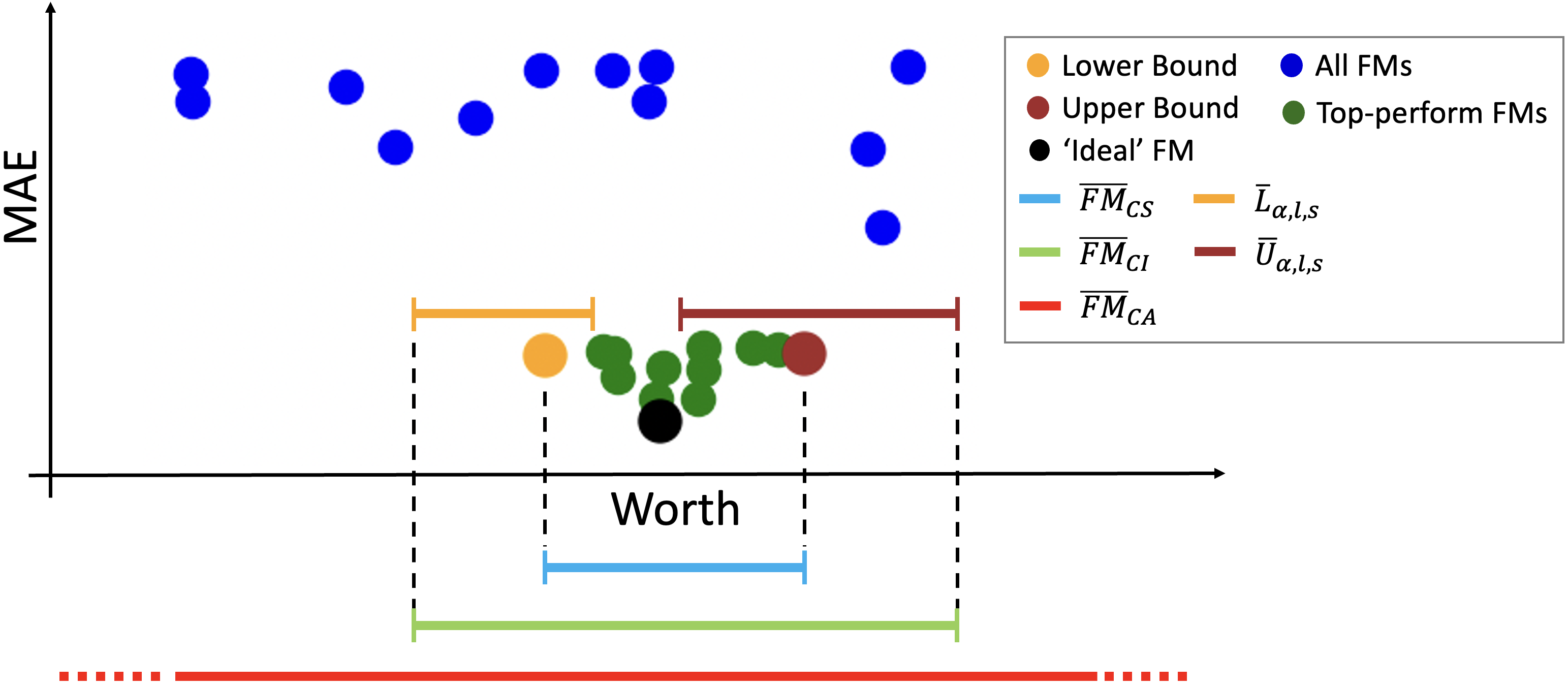}
    \caption{Visual demonstration for the construction of $\tilde{g}_{\alpha,l,s}(\{x_{2},x_{3}\})$ in the $\overline{FM}_{CI}$.}
    \label{consrtuction demo}
\end{figure}

\begin{algorithm}[h]
\caption{Construction of the $\overline{FM}_{CI}$}
\begin{algorithmic}[1]
\STATE Input the MC sample $g^{MC}_{s}$
\FOR {$k=1$ to $I$}
    \STATE For all $A\in A_{k}$
    \IF{k=1}
    \STATE $\tilde{g}_{\alpha,l,s}(A)=[g(A),g(A)]$
    \ELSIF{k=I}
    \STATE $\tilde{g}_{\alpha,l,s}(A)=[1,1]$
    \ELSE
        \FOR {$b = 1$ to $BT$}
            \STATE Randomly draw $s$ samples with replacement from $g^{MC}_{s}$ as $g^{MC,(b)}_{s}$
            \STATE Select the top-performing set as discussed in Section~\ref{subsec:Exploring the distribution of the FMs} from $g^{MC,(b)}_{s}$ as $\boldsymbol{g}^{(b)}_{l,s}$
            \FOR {$A\in A_{k}$}
                \STATE Select the minimum and maximum points of $\boldsymbol{g}^{(b)}_{l,s}(A)$ as $L^{(b)}_{l,s}(A)$ and $U^{(b)}_{l,s}(A)$, respectively.
            \ENDFOR
        \ENDFOR
        \FOR {$A\in A_{k}$}
            \STATE Let $L^{B}_{l,s}(A)$ and $U^{B}_{l,s}(A)$ be two sets, where $L^{B}_{l,s}(A)=\{L^{(b)}_{l,s}(A)|b=1,\cdots, BT\}$ and 
                   $U^{B}_{l,s}(A)=\{U^{(b)}_{l,s}(A)|b=1,\cdots, BT\}$.
            \STATE Calculate the $\frac{\alpha}{2}$ and $1-\frac{\alpha}{2}$ quantiles of $L^{B}_{l,s}(A)$ as $L^{-}_{\alpha,l,s}(A)$ and $L^{+}_{\alpha,l,s}(A)$
            \STATE Calculate the $\frac{\alpha}{2}$ and $1-\frac{\alpha}{2}$ quantiles of $U^{B}_{l,s}(A)$ as $U^{-}_{\alpha,l,s}(A)$ and $U^{+}_{\alpha,l,s}(A)$
            \STATE $\tilde{g}_{\alpha,l,s}(A)=[L^{-}_{\alpha,l,s}(A),U^{+}_{\alpha,l,s}(A)]$ 
        \ENDFOR
    \ENDIF
\ENDFOR
\end{algorithmic}
\label{Obtain confidence intervals using bootstrap method}
\end{algorithm}

Following the synthetic example, set $\alpha$ as $0.01$, $0.05$ and $0.1$, and the resulting $\overline{FM}_{CI}$ is shown is Table~\ref{The results for synthetic example}. The setting of this paper stands the same as Table~\ref{The IFM-CS for the synthetic example}. 
It seems that the $\overline{FM}_{CI}$ does not change significantly for different $\alpha$. 
This may be due to the choice of the threshold $l$, under which $L_{l,s}$ and $U_{l,s}$ appear to be stable.
The next subsection provides evidence that $\alpha$ does influence the $\overline{FM}_{CI}$. 

\begin{table}[h]
    \centering
    \caption{The $\overline{FM}_{CI}$ for the synthetic example}
    \begin{tabular}{c|ccc|c}
        \toprule
        \boldmath{$k=1$}        &  \multicolumn{3}{c|}{\boldmath{$k=2$}}         &   \boldmath{$k=3$} \\ 
        \midrule
        &  \boldmath{$\alpha=0.01$}  & \boldmath{$\alpha=0.05$}  & \boldmath{$\alpha=0.1$}  & \\
        \midrule
$0.4$   & $[0.671, 0.724]$     & $[0.671, 0.724]$  & $[0.671, 0.724]$ & $1$\\ 
$0.5$   & $[0.765, 0.836]$     & $[0.765, 0.829]$  & $[0.765, 0.829]$ & \\ 
$0.6$   & $[0.869, 0.930]$     & $[0.869, 0.930]$  & $[0.869, 0.930]$ & \\ 
        \bottomrule
    \end{tabular}
    \label{The results for synthetic example}
\end{table}

\subsection{Testing the confidence level of the $\overline{FM}_{CI}$} \label{test of confidence level for the FM}

As discussed above, $\overline{FM}_{CI}$ is the CI for $\overline{FM}_{CS}$ at the $1-\alpha$ confidence level. 
Although the $\overline{FM}_{CI}$ is regarded as the CI for the `ideal' FM, 
it does not guarantee that the actual confidence level of $\overline{FM}_{CI}$ for the `ideal' FM stays the same.
In this section, we conduct experiments to test whether the actual confidence level of the $\overline{FM}_{CI}$ for the `ideal' FM is consistent with $1-\alpha$. 
Since the threshold $l$ influences the length of the $\overline{FM}_{CI}$ and thus affects its confidence level for the `ideal' FM, 
the existence of a feasible $l$ that achieves a confidence level of $1-\alpha$ would support the claim that $\overline{FM}_{CI}$ serves as the CI for the `ideal' FM at the $1-\alpha$ confidence level.

Referring to the statement in \cite{hazra2017CI}, a interpretation of a $95\%$ CI is: `This implies that were the estimation process to be repeated over
and over with random samples from the same population, then $95\%$ of the calculated intervals would be expected to contain the true value.'.
So, the $\overline{FM}_{CI}$ for the `ideal' FM with significant level ($\alpha$) can be interpreted as: $(1-\alpha)\cdot 100\%$ of the $\overline{FM}_{CI}$ generated by resampling the MC sample ($g^{MC}_{s}$) encompass the `ideal' FM. 

Let $Gen$ be the generation of the regeneration of the $\overline{FM}_{CI}$, $\boldsymbol{l}$ be a set that contains multiple thresholds that are selected for testing. 
Algorithm~\ref{Calculation for the actual confidence level} shows the calculation of the actual confidence level of the $\overline{FM}_{CI}$ for the `ideal' FM.

\begin{algorithm}[h]
\caption{Calculation for the actual confidence level of the $\overline{FM}_{CI}$}
\begin{algorithmic}[1]
\STATE Let $\tilde{g}_{\alpha,l,s}$ be a $\overline{FM}_{CI}$
\FOR{$l\in\boldsymbol{l}$}
    \STATE Define $Count$ and set it as $0$
    \FOR{b = 1 to $Gen$} 
        \STATE Sample the MC sample as $g^{(b),MC}_{s}$
        \STATE Obtain the $\overline{FM}_{CI}$ according to Algorithm~\ref{Obtain confidence intervals using bootstrap method} as $\tilde{g}^{(b)}_{\alpha,l,s}$ 
        \IF{The `ideal' FM is in $\tilde{g}^{(b)}_{\alpha,l,s}$}
            \STATE $Count=Count+1$
        \ENDIF
    \ENDFOR
    \STATE The actual confidence level of $\tilde{g}_{\alpha,l,s}$ is $\frac{Count}{Gen}\cdot100\%$
\ENDFOR
\end{algorithmic}
\label{Calculation for the actual confidence level}
\end{algorithm}

Following the synthetic example, let $\boldsymbol{l}=\{0.1,0.05,0.01,0.008,0.006,0.004,0.002\}$, set $Gen = 100$, the actual confidence level of the $\overline{FM}_{CI}$ is shown in
Table~\ref{The actual confidence level of IFM for the synthetic example}, where $g12$ represents $g(\{x_{1},x_{2}\})$, $g13$ represents $g(\{x_{1},x_{3}\})$, and $g23$ represents $g(\{x_{2},x_{3}\})$. The percent sign ($\%$) is omitted for simplicity and the three values in each column correspond to the percentage to which the ideal measure is captured within the resulting confidence interval for each of the three source combinations $\{x_{1},x_{2}\}$, $\{x_{1},x_{3}\}$ and $\{x_{2},x_{3}\}$. 
The bold values in the table indicate that the corresponding $\overline{FM}_{CI}$ can be treated as the CI for the `ideal' FM at the same $1-\alpha$ confidence level. 
Note that we compute the average across the three source combinations, select the values closest to $1-\alpha$, and mark them in bold. 

\begin{table}[h]
    \centering
    \caption{The actual confidence level of the $\overline{FM}_{CI}$ for the synthetic example}
    \begin{tabular}{c|c|c|c}
        \toprule
$l$                     & \boldmath{$\alpha=0.01$}  & \boldmath{$\alpha=0.05$}  & \boldmath{$\alpha=0.1$}  \\          
        \midrule
                        & $g_{12},g_{13},g_{23}$ & $g_{12},g_{13},g_{23}$ & $g_{12},g_{13},g_{23}$ \\
        \midrule
1                       & $100,100,100$    & $100,100,100$    & $100,100,100$  \\
0.1                     & $100,100,100$    & $100,100,100$    & $100,100,100$  \\
0.05                    & $100,100,100$    & $100,100,100$    & $100,100,100$  \\
0.01                    & $100,100,100$    & $99,100,100$     & $98,100,100$   \\
0.008                   & $100,100,100$    & $98,100,100$     & $98,100,100$   \\
0.006                   & $98,100,100$     & $97,100,100$     & $95,97,99$   \\
0.004                   & $\mathbf{97,100,100}$ & $\mathbf{93, 95, 99}$ & $90,91,98$   \\
0.002                   & $92,93,98$       & $85,82,92$       & $75,72,78$     \\
        \bottomrule
    \end{tabular}

    \label{The actual confidence level of IFM for the synthetic example}
\end{table}

It can be implied that $\overline{FM}_{CI}$ can be regraded as the $99\%$ CI for the `ideal' FM when $l=0.004$. 
It is also possible to treat the $\overline{FM}_{CI}$ as the $95\%$ CI for the `ideal' FM when $l=0.004$.
However, none of the selected thresholds is feasible for constructing the $\overline{FM}_{CI}$ as the $90\%$ CI, which is limited by the selection of the threshold set $\boldsymbol{l}$ and the MC sample size $s$. Further discussion on the selection of hyperparameters, and the investigation of advanced approaches for constructing the $\overline{FM}_{CI}$, is left for future work. 
Note that the aim of this test is to empirically show that, by adjusting $l$, 
the $\overline{FM}_{CI}$ can be regraded as the confidence interval for the `ideal' FM at the $1-\alpha$ confidence level, rather than to choose the proper $l$. 
Therefore, we do not adjust Table~\ref{The results for synthetic example}, as it is intended to demonstrate the generation of the $\overline{FM}_{CI}$. One can re-generate the $\overline{FM}_{CI}$ by changing $l$ to $0.004$ and $\alpha$ to $0.01$ and $0.05$, so that it can be the actual confidence interval for the `ideal' FM.

As shown in this Table~\ref{The actual confidence level of IFM for the synthetic example}, 
when $l$ is $0.006$, $0.004$ or $0.002$, the actual confidence levels of the $\overline{FM}_{CI}$ differ across different $\alpha$. If $\alpha$ does not affect the $\overline{FM}_{CI}$, the actual confidence levels would remain the same for different $\alpha$. Therefore, although the $\overline{FM}_{CI}$ shown in Table~\ref{The IFM-CS for the synthetic example} are almost the same across different $\alpha$, $\alpha$ does affect the $\overline{FM}_{CI}$ when different threshold ($l$) is used.   
As $l$ affects the actual confidence level, further research is needed to determine an appropriate choice of $l$, and we leave this for future work.

Also note that, when $l=0.002$, there is only one discrete FM in the top-performing set, so the proposed approach acts as a calculation of the CI for a discrete FM, where this FM is identified by choosing the best discrete FM from the MC sample.

\subsection{The Choquet fuzzy integral based on the $\overline{FM}_{CI}$}

As the $\overline{FM}_{CI}$ is obtained, it is natural to fuse information using the Choquet FI (CFI). Since the $\overline{FM}_{CI}$ is an interval-valued FM, the CFI output in respect to it is also an interval. 
We argued that the CFI output based on the $\overline{FM}_{CI}$ can be regarded as the confidence interval for the real information fusion result (the ground truth) under the same ($1-\alpha$) confidence level. 
In other words, when we fuse information using the CFI based on multiple regenerated $\overline{FM}_{CI}$ with significant level $\alpha$, 
$(1-\alpha)\cdot100\%$ of the CFI outputs (confidence intervals) cover the real information fusion result (ground truth).

Fig.~\ref{The CFI output for the $205^{th}$ sample in the synthetic example} demonstrates the CFI output of the $205^{th}$ sample in the synthetic example together with the CFI output in respect to four discrete FM generation approaches, where the $205^{th}$ sample is $(0, 2.22, 5.56)$,
GA-FM represents the discrete FM obtained by genetic algorithm \cite{Scott2017ChI-DE} but based on the given densities. 

\begin{figure}[h]
    \centering
    \includegraphics[width=0.45\textwidth]{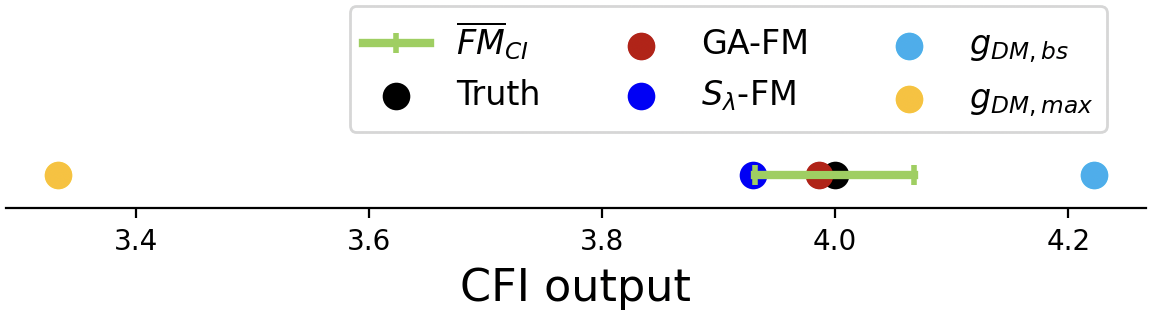}
    \caption{The CFI output for the $205^{th}$ sample in the synthetic example.}
    \label{The CFI output for the $205^{th}$ sample in the synthetic example}
\end{figure}

\subsection{Testing the confidence level of the Choquet FI outputs based on the $\overline{FM}_{CI}$} \label{test based on CFI}

A major issue for the test in Section~\ref{test of confidence level for the FM} is that the `ideal' FM is unknown in the real-world scenario.
Similarly as section~\ref{test of confidence level for the FM}, we want to test whether the actual confidence level of the CFI outputs is consistent with $1-\alpha$.

Here the testing process is similar to Algorithm~\ref{Calculation for the actual confidence level} and is shown as Algorithm~\ref{Calculation for the actual confidence level for the CFI}.
Unlike the `ideal' FM, which is unknown, it is a guarantee that the ground truth is known and thus the actual confidence level of the CFI output based on the $\overline{FM}_{CI}$ is obtainable. 

\begin{algorithm}[h]
\caption{Calculation for the actual confidence level of the CFI output based on the $\overline{FM}_{CI}$}
\begin{algorithmic}[1]
\STATE Let $\tilde{g}_{\alpha,l,s}$ be a $\overline{FM}_{CI}$
\FOR{$l\in\boldsymbol{l}$}
    \STATE Define $Count$ and set it as $0$
    \FOR{b = 1 to $Gen$} 
        \STATE Sample the MC sample as $g^{(b),MC}_{s}$
        \STATE Obtain the $\overline{FM}_{CI}$ according to Algorithm~\ref{Obtain confidence intervals using bootstrap method} as $\tilde{g}^{(b)}_{\alpha,l,s}$ 
        \FOR{t = 1 to $T$}
            \STATE Calculate the CFI based on $\tilde{g}^{(b)}_{\alpha,l,s}$ as $C_{\tilde{g}^{(b)}_{\alpha,l,s}}(h(X,t))$
            \IF{The ground truth ($y_{t}$) is in $C_{\tilde{g}^{(b)}_{\alpha,l,s}}(h(X,t))$}
                \STATE $Count=Count+1$
            \ENDIF
        \ENDFOR
    \ENDFOR
    \STATE The actual confidence level of the CFI output based on $\tilde{g}_{\alpha,l,s}$ is $\frac{Count}{Gen\cdot T}\cdot100\%$
\ENDFOR
\end{algorithmic}
\label{Calculation for the actual confidence level for the CFI}
\end{algorithm}

Following the synthetic example, Table~\ref{The actual confidence level of CFI for the synthetic example} shows the actual confidence level of the CFI output for the ground truth,
which follows the settings as Table~\ref{The actual confidence level of IFM for the synthetic example}.  
Note that this actual confidence level is an average across the dataset. 

\begin{table}[h]
    \centering
    \caption{The actual confidence level of the CFI output for the synthetic example}
    \begin{tabular}{c|c|c|c}
        \toprule
$l$                     & \boldmath{$\alpha=0.01$}  & \boldmath{$\alpha=0.05$}  & \boldmath{$\alpha=0.1$}  \\          
        \midrule                    
1                       & $100$    & $100$    & $100$  \\
0.1                     & $100$    & $100$    & $100$  \\
0.05                    & $100$    & $100$    & $100$  \\
0.01                    & $100$    & $99.72$  & $99.43$   \\
0.008                   & $100$    & $99.43$  & $99.43$   \\
0.006                   & $99.43$  & $99.15$  & $97.44$   \\
0.004                   & \textbf{99.15}  & \textbf{96.30}  & $94.02$   \\
0.002                   & $95.16$  & $88.32$  & $78.63$     \\
        \bottomrule
    \end{tabular}
    \label{The actual confidence level of CFI for the synthetic example}
\end{table}

Comparing these two tables, the actual confidence levels of the $\overline{FM}_{CI}$ are $100\%$ across all three worths, it is aligned that the actual confidence levels of the CFI output are also $100\%$.
Also when $l=0.004$, $\alpha=0.01,0.05$, the resulting $\overline{FM}_{CI}$ act good in the tests, which also fit that the CFI output based on these two FMs act good in the tests.
From an overall perspective, the actual confidence levels of the $\overline{FM}_{CI}$ and the actual confidence levels of the CFI output are associated: when one increases, the other also increases; when one decreases, the other also decreases.

Building on these, we argue that, in practice, we can use the actual confidence level of the CFI output to represent the actual confidence level of the $\overline{FM}_{CI}$.
In other words, if we want to know whether $1-\alpha$ is the `real' confidence level of the $\overline{FM}_{CI}$, we can test the actual confidence level of the CFI output to see whether it is aligned with $1-\alpha$.

\section{Experiment} \label{Experiment}

\subsection{Data description}

In addition to the synthetic example, another example from Li et al. \cite{Li2013example} is used to demonstrate the proposed approaches shown in Table~\ref{Example 17}. 
This is a customer evaluation problem (15 customers) where the left column presents the rating of four attributes and the right column represents the overall evaluation. The four attributes are considered as the evidence, while the overall evaluation is regarded as the ground truth.

\begin{table}[h]
    \centering
    \caption{Example $17$ in Li et al. \cite{Li2013example}}
    \begin{tabular}{cccc|c}
        \toprule
        \makecell{Transfer \\ time (T)} & 
        \makecell{Congestion \\ (C)} & 
        \makecell{Travel \\ time (TT)} & 
        \makecell{Ticket \\ price (TP)} & 
        \makecell{Evaluation \\ ($E_{i}$)} \\
        \midrule
                   $0$   & $0$   & $0.3$ & $1$   & $0.2$  \\
                   $0.3$ & $0.8$ & $0.5$ & $0.6$ & $0.5$  \\
                   $0.7$ & $0.9$ & $1$   & $0.7$ & $0.8$  \\
                   $0.1$ & $0$   & $0.4$ & $0.3$ & $0.15$ \\
                   $0.5$ & $0.4$ & $0.5$ & $0.6$ & $0.5$  \\
                   $1$   & $0.3$ & $0.8$ & $0.8$ & $0.7$  \\
                   $0.3$ & $1$   & $0$   & $1$   & $0.5$  \\
                   $0.9$ & $0.6$ & $0.5$ & $0.7$ & $0.7$  \\
                   \textbf{0.6}  & \textbf{0.8} & \textbf{0.2} & \textbf{0.4} & \textbf{0.5}  \\
                   $0.4$ & $1$   & $0.5$ & $0$   & $0.4$  \\
                   $0.8$ & $0.4$ & $1$   & $0.3$ & $0.6$  \\
                   $1$   & $0.8$ & $0.7$ & $0.5$ & $0.75$ \\
                   $0.2$ & $0.6$ & $1$   & $0.7$ & $0.5$  \\
                   $0$   & $0.3$ & $0.2$ & $0.9$ & $0.2$  \\
                   $0.4$ & $0.2$ & $0.8$ & $0$   & $0.3$  \\
        \bottomrule
    \end{tabular}
    \label{Example 17}
\end{table}

\subsection{Generating the context-agnostic interval-valued FM}

To generate the $\overline{FM}_{CA}$, the densities must be determined. This paper uses the densities from \cite{Vicen2022example} which 
are obtained by genetic algorithms as $g(\{T\}) = 0.2743$, $g(\{C\}) = 0.1946$, $g(\{TT\}) = 0.1769$ and $g(\{TP\}) = 0.1327$.
The $\overline{FM}_{CA}$ can be generated as discussed in Section~\ref{Generate interval-valued FM from densities} and is shown in Table~\ref{IFM-CA for Example 17}, where T,C stands for the combination $\{T,C\}$ and others similarly.

\begin{table}[h]
    \centering
    \caption{The $\overline{FM}_{CA}$ for Example $17$}
    \begin{tabular}{llll}
        \toprule
                   \boldmath{$k=1$} & \boldmath{$k=2$} & \boldmath{$k=3$} & \boldmath{$k=4$} \\
        \midrule
                    T: 0.2743          &T,C: [0.2743,1]          &T,C,TT: [0.2743,1]          & X: 1  \\ 
                    C: 0.1946          &T,TT: [0.2743,1]         &T,C,TP: [0.2743,1]          &        \\ 
                    TT: 0.1769         &T,TP: [0.2743,1]         &T,TT,TP: [0.2743,1]         &         \\
                    TP: 0.1327         &C,TT: [0.1946,1]         &C,TT,TP: [0.1946,1]         &          \\      
                                       &C,TP: [0.1946,1]         &                               &           \\
                                       &TT,TP: [0.1769,1]        &                               &            \\
        \bottomrule
    \end{tabular}
    \label{IFM-CA for Example 17}
\end{table}

\subsection{Generating the context-specific interval-valued FM}

The Choquet FI (CFI) is chosen to construct $\overline{FM}_{CS}$ as the densities from \cite{Vicen2022example} are generated based on the CFI.
Set $l=0.01$, $s=100000$, Table~\ref{IFM-CS for Example 17} shows the $\overline{FM}_{CS}$ for Example $17$. Note that this table follows the same setting as Table~\ref{IFM-CA for Example 17} and the captions of the combinations, e.g. T,C:, are removed for simplicity.

\begin{table}[h]
    \centering
    \caption{The $\overline{FM}_{CS}$ for Example $17$}
    \begin{tabular}{cccc}
        \toprule
                   \boldmath{$k=1$} & \boldmath{$k=2$} & \boldmath{$k=3$} & \boldmath{$k=4$} \\
        \midrule
                    0.2743          &[0.3866,0.6853]          &[0.5372,0.7457]          & 1  \\ 
                    0.1946          &[0.4149,0.5958]         &[0.6821,0.9643]          &        \\ 
                    0.1769         &[0.3342,0.6817]         &[0.6550,0.8347]         &         \\
                    0.1327         &[0.3284,0.4787]         &[0.4876,0.6433]         &          \\      
                                       &[0.2269,0.4621]         &                               &           \\
                                       &[0.2009,0.4184]        &                               &            \\
        \bottomrule
    \end{tabular}
    \label{IFM-CS for Example 17}
\end{table}

\subsection{Generating the context-specific confidence interval-valued FM}

Going one step further, significant level $\alpha$ is added to generate the $\overline{FM}_{CI}$. 
Table~\ref{IFM-CI for Example 17} shows the $\overline{FM}_{CI}$ when $\alpha$ is set as $0.01$, $0.05$ and $0.1$.

\begin{table}[h]
    \centering
    \caption{The $\overline{FM}_{CI}$ for Example $17$}
    \begin{tabular}{c|c|c|c}
        \toprule
$\tilde{g}_{\alpha,l,s}$                      &  \boldmath{$\alpha=0.01$}  & \boldmath{$\alpha=0.05$}  & \boldmath{$\alpha=0.1$} \\
        \midrule
T,C                   &  [0.3424,0.6853]        &[0.3424,0.6853]         &[0.3866,0.6853]           \\ 
T,TT                  &  [0.4149,0.5958]        &[0.4149,0.5958]         &[0.4149,0.5958]           \\ 
T,TP                  &  [0.3342,0.7131]        &[0.3342,0.7131]         &[0.3342,0.7131]          \\
C,TT                  &  [0.2329,0.5153]        &[0.2329,0.5153]         &[0.2329,0.4787]          \\      
C,TP                  &  [0.2269,0.5148]        &[0.2269,0.5148]         &[0.2269,0.5148]                                 \\
TT,TP                 &  [0.2009,0.5100]        &[0.2009,0.5100]         &[0.2009,0.4184]                                \\
        \midrule
T,C,TT                &  [0.5372,0.8018]        &[0.5372,0.8018]         &[0.5372,0.7457]                                 \\
T,C,TP                &  [0.6821,0.9755]        &[0.6821,0.9755]         &[0.6821,0.9723]                                 \\
T,TT,TP               &  [0.6283,0.8347]        &[0.6283,0.8347]         &[0.6283,0.8347]                                 \\
C,TT,TP               &  [0.4437,0.6484]        &[0.4437,0.6433]         &[0.4437,0.6433]                                \\                                   
                                   
        \bottomrule
    \end{tabular}
    \label{IFM-CI for Example 17}
\end{table}

\subsection{Obtaining the overall evaluation using the CFI based on the $\overline{FM}_{CI}$} \label{CFI output real experiment}

As the $\overline{FM}_{CI}$ is obtained, the next step is to fuse the four attributes to make the overall evaluation using the CFI.
Here, the CFI output based on the $\overline{FM}_{CI}$ is regraded as the $1-\alpha$ confidence interval for the overall evaluation. 

Following the previous settings and set $\alpha$ as $0.05$, the CFI output is $[0.4438, 0.5711]$ and is
shown in Fig.~\ref{The CFI output for the $9^{th}$ sample in Example $17$}.
We choose the $9^{th}$ sample, which is highlighted in bold in Table~\ref{Example 17}, for demonstration.
Note that (Max,+) stands for the (MAX,$\oplus$)-transforms obtained from \cite{Vicen2022example}.

\begin{figure}[h]
    \centering
    \includegraphics[width=0.45\textwidth]{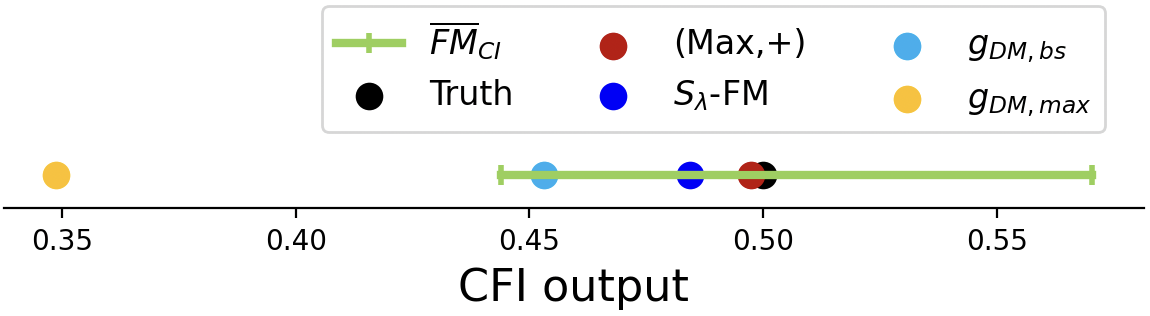}
    \caption{The CFI output for the $9^{th}$ sample in Example $17$.}
    \label{The CFI output for the $9^{th}$ sample in Example $17$}
\end{figure}

\subsection{Testing the confidence level} \label{real test CI}

Here, since the `ideal' FM is unknown, as discussed in Section~\ref{test based on CFI},
we use the confidence level of the CFI output to represent the confidence level of the $\overline{FM}_{CI}$.
Table~\ref{The actual confidence level for example 17} shows the actual confidence level of the CFI output.
These results indicate that the $\overline{FM}_{CI}$ can be regarded as the $99\%$ CI for the `ideal' FM when $l=0.05$. 
Similarly, the $\overline{FM}_{CI}$ can be regarded as the $95\%$ and the $90\%$ CI for the `ideal' FM when $l=0.01$ and $l=0.005$, respectively. 

\begin{table}[h]
    \centering
    \caption{The actual confidence level of the CFI output for Example $17$}
    \begin{tabular}{c|c|c|c}
        \toprule
$l$                     & \boldmath{$\alpha=0.01$}  & \boldmath{$\alpha=0.05$}  & \boldmath{$\alpha=0.1$}  \\          
        \midrule                    
1                       & $100$    & $100$    & $100$  \\
0.1                     & $100$    & $100$    & $100$  \\
0.05                    & \textbf{99.93}  & $99.93$  & $99.87$  \\
0.01                    & $95.33$  & \textbf{93.87}  & $93.34$   \\
0.005                   & $92.07$   & $91.00$  & \textbf{90.20}   \\
0.004                   & $91.27$   & $89.73$  & $88.53$   \\
0.003                   & $90.27$  & $87.80$  & $85.60$   \\
0.002                   & $88.20$  & $84.27$  & $81.93$     \\
0.001                   & $83.13$  & $77.33$  & $70.07$     \\
        \bottomrule
    \end{tabular}
    \label{The actual confidence level for example 17}
\end{table}

As discussed in Section~\ref{test of confidence level for the FM}, this test empirically shows the existence of $l$ that can make the $\overline{FM}_{CI}$ be the confidence interval for the `ideal' FM at $1-\alpha$ confidence level. One can regenerate the $\overline{FM}_{CI}$ by adjusting $l$ to $0.05$, $0.01$, and $0.005$, and $\alpha$ to $0.01$, $0.05$, and $0.1$, respectively. Although we do not provide all the regenerated $\overline{FM}_{CI}$ here, under the setting of $l=0.01$ and $\alpha=0.05$, the third column in Table~\ref{IFM-CI for Example 17} is the CI for the `ideal' FM at the $0.95$ confidence level. In addition, the CFI output based on this $\overline{FM}_{CI}$ is the CI for the ground truth at the $0.95$ confidence level.
For example, as calculated in Section~\ref{CFI output real experiment}, $[0.4438, 0.5711]$ is the $95\%$ CI for the ground truth of the $9^{th}$ sample.

\section{Conclusions \& Limitations} \label{Conclusion}

This paper discusses how Fuzzy Measure generation for information aggregation. Many approaches generate discrete FMs by leveraging the densities and the monotonicity constraint. However, we show that these information is insufficient to uniquely identify a discrete FM, but a context-agnostic interval-valued FM ($\overline{FM}_{CA}$) can be obtained. We demonstrate the structure of this FM and show that density-based FM generation approaches place FMs within it but sometimes in quite different places. 

We further show that once a dataset and a specific FI are given, a more specific interval-valued FM can be determined. A Monte Carlo-based approach is proposed to generate such an FM, ($\overline{FM}_{CS}$).

Going a step further, we describe how to transform $\overline{FM}_{CS}$ to a confidence level interval-valued FM ($\overline{FM}_{CI}$), providing a measure of its expected quality. This FM can be regarded as the confidence interval for the `ideal' FM.

Finally, we show empirically how the output of a Choquet FI based on the latter FM can be regarded as the confidence interval for the `ideal' information fusion result with the same confidence level, providing the first such a priori characterisation to the best of our knowledge.

A synthetic example for three individual sources is used alongside the introduction of the proposed approaches for illustration and demonstration. 
Another example of four individual sources drawn from the literature \cite{Li2013example} is also included to further demonstrate the proposed approaches.

One of the limitations of this paper is that the densities are considered to be crisp values. While this is the norm, different approaches, especially those based on optimization may obtain different densities, leading to interval-valued or maybe fuzzy set-valued densities. This is an area of work we expect to explore in the future. 
Another limitation is that the Monte Carlo process is computationally expensive, thus reducing computational complexity is worth exploring in future work, as is the application of the proposed approach within ensemble classification.

In addition, although the tests in Section~\ref{test of confidence level for the FM}, Section~\ref{test based on CFI}, and Section~\ref{real test CI} have empirically shown that $\overline{FM}_{CI}$ can be regarded as the confidence interval for the `ideal' FM at the $1-\alpha$ confidence level, it is not guaranteed that this is always the case.

\bibliographystyle{IEEEtran}
\bibliography{Manuscript}

\appendices
\section{Proof of Theorem~\ref{theorem:FM_CA_monotonocity}}

\begin{proof}
Let $A \subseteq B \subseteq X$,
\[
\max_{x_{i}\in B} g(\{x_{i}\})
   = \max\!\left(
       \max_{x_{i}\in A} g(\{x_{i}\}),
       \max_{x_{i}\in B\setminus A} g(\{x_{i}\})
     \right).
\]

If 
\[
\max_{x_{i}\in A} g(\{x_{i}\})
   \geq \max_{x_{i}\in B\setminus A} g(\{x_{i}\}),
\]
then
\[
\max_{x_{i}\in B} g(\{x_{i}\})
   = \max_{x_{i}\in A} g(\{x_{i}\}).
\]

If 
\[
\max_{x_{i}\in A} g(\{x_{i}\})
   < \max_{x_{i}\in B\setminus A} g(\{x_{i}\}),
\]
then
\[
\max_{x_{i}\in B} g(\{x_{i}\})
   = \max_{x_{i}\in B\setminus A} g(\{x_{i}\})
   > \max_{x_{i}\in A} g(\{x_{i}\}).
\]

Therefore,
\[
\max_{x_{i}\in B} g(\{x_{i}\})
   \geq \max_{x_{i}\in A} g(\{x_{i}\}).
\]

It is obvious that
\[
1 \geq \max_{x_{i}\in A} g(\{x_{i}\}).
\]

According to (\ref{interval_larger}),
\[
\bigl[\max_{x_{i}\in A} g(\{x_{i}\}),\,\max_{x_{i}\in A} g(\{x_{i}\})\bigr]
   \leq \bigl[\max_{x_{i}\in B} g(\{x_{i}\}),\,1\bigr],
\]
\[
\bigl[\max_{x_{i}\in A} g(\{x_{i}\}),\,1\bigr]
   \leq \bigl[\max_{x_{i}\in B} g(\{x_{i}\}),\,1\bigr],
\]
\[
\bigl[\max_{x_{i}\in A} g(\{x_{i}\}),\,1\bigr]
   \leq \bigl[1,\,1\bigr].
\]

According to (\ref{IFM_DM}),
\[
\bar{g}_{CA}(A) \leq \bar{g}_{CA}(B).
\]
\end{proof}

\section{Proof of Theorem~\ref{IFM-IE_theo_momotonicity}}

\begin{proof}
Monotonicity is manually fixed as discussed in Section~\ref{Identification of context-specific confidence interval-valued FM}.
\end{proof}

\section{Proof of Theorem~\ref{theorem IFM-IE as CI}}

\begin{proof}
Let $k\in(1,I)$, $\forall\ A\in A_{k} $,
\begin{align}
\tilde{g}_{\alpha,l,s}(A)&=[L^{-}_{\alpha,l,s}(A),U^{+}_{\alpha,l,s}(A)], \notag \\
\bar{g}_{l,s}(A) &= [L_{l,s},U_{l,s}]. \notag
\end{align}

Since $\bar{L}_{\alpha,l,s}(A)$ and $\bar{U}_{\alpha,l,s}(A)$ are two-sided bootstrap $1-\alpha$ CIs for $\bar{g}_{l,s}(A)$, then
\begin{align}
&P\left(L^{-}_{\alpha,l,s}(A)>L_{l,s}(A)\right)=\frac{\alpha}{2}, \notag \\ 
&P\left(U^{+}_{\alpha,l,s}(A)<U_{l,s}(A)\right)=\frac{\alpha}{2}.\notag
\end{align}

Since
\begin{align}
&\bar{g}_{l,s}(A)\subseteq \tilde{g}_{\alpha,l,s}(A) \notag \\
&\iff L^{-}_{\alpha,l,s}(A)\leq L_{l,s}(A),\ U^{+}_{\alpha,l,s}(A)\geq U_{l,s}(A). \notag
\end{align}
, then 
\begin{align}
&P\left(\bar{g}_{l,s}(A)\subseteq \tilde{g}_{\alpha,l,s}(A)\right) \notag \\
&=P(L^{-}_{\alpha,l,s}(A)\leq L_{l,s}(A))\cdot P(U^{+}_{\alpha,l,s}(A)\geq U_{l,s}(A)). \notag
\end{align}

Since
\begin{align}
P(L^{-}_{\alpha,l,s}(A)\leq L_{l,s}(A))=\bigl(1-P(L^{-}_{\alpha,l,s}(A)>L_{l,s}(A))\bigr) \notag \\ 
P(U^{+}_{\alpha,l,s}(A)\geq U_{l,s}(A))=\bigl(1-P(U^{+}_{\alpha,l,s}(A)<U_{l,s}(A))\bigr), \notag
\end{align}
we have
\begin{align}
&P\left(\bar{g}_{l,s}(A)\subseteq \tilde{g}_{\alpha,l,s}(A)\right) 
= (1-\frac{\alpha}{2})\cdot (1-\frac{\alpha}{2}) = 1-\alpha+\frac{\alpha^{2}}{4}. \notag
\end{align}
\end{proof}

\end{document}